\pdfoutput=1
\documentclass[conference]{IEEEtran}

\usepackage{cite}
\usepackage{amsmath,amssymb,amsfonts,mathtools}
\usepackage{amsthm}
\usepackage{array}
\usepackage{booktabs}
\usepackage{tabularx}
\usepackage{algorithm}
\usepackage{algorithmic}
\usepackage{graphicx}
\usepackage{textcomp}
\usepackage{xcolor}
\usepackage{microtype}
\usepackage{tikz}
\usetikzlibrary{arrows.meta,positioning,fit}
\usepackage[hidelinks]{hyperref}

\microtypesetup{protrusion=true,expansion=true}
\newif\ifanonymous
\anonymousfalse
\newtheorem{theorem}{Theorem}[section]
\newtheorem{lemma}[theorem]{Lemma}
\newtheorem{proposition}[theorem]{Proposition}
\newtheorem{corollary}[theorem]{Corollary}
\theoremstyle{definition}
\newtheorem{definition}[theorem]{Definition}
\newtheorem{remark}[theorem]{Remark}
\newtheorem{assumption}[theorem]{Assumption}

\newcommand{\R}{\mathbb{R}}
\newcommand{\E}{\mathbb{E}}

\newcommand{\norm}[1]{\left\lVert #1 \right\rVert}
\newcommand{\ip}[2]{\left\langle #1,#2 \right\rangle}
\newcommand{\defeq}{\mathrel{\mathop:}=}

\newcommand{\diag}{\mathrm{diag}}

\newcommand{\placeholderbox}[2][1.55in]{%
  \fbox{\parbox[c][#1][c]{0.95\columnwidth}{\centering\textbf{Placeholder figure}\\[2mm]#2}}
}

\def\BibTeX{{\rm B\kern-.05em{\sc i\kern-.025em b}\kern-.08em
    T\kern-.1667em\lower.7ex\hbox{E}\kern-.125emX}}

\begin{document}

\title{Sharpness Aware Surrogate Training for Spiking Neural Networks}

\ifanonymous
\author{%
\IEEEauthorblockN{Anonymous Submission}
\IEEEauthorblockA{Author information omitted for review}
}
\else
\author{%
\IEEEauthorblockN{Maximilian Nicholson}
\IEEEauthorblockA{University of Bath, United Kingdom\\
mn866@bath.ac.uk}
}
\fi

\maketitle

\begin{abstract}
\noindent Surrogate gradients are a standard tool for training spiking neural networks (SNNs), but conventional hard forward or surrogate backward training couples a nonsmooth forward model with a biased gradient estimator. We study sharpness aware Surrogate Training (SAST), which applies sharpness aware Minimization (SAM) to a surrogate forward SNN trained by backpropagation. In this formulation, the optimization target is an ordinary smooth empirical risk, so the training gradient is exact for the auxiliary model being optimized. Under explicit boundedness and contraction assumptions, we derive compact state stability and input Lipschitz bounds, establish smoothness of the surrogate objective, provide a first order SAM approximation bound, and prove a nonconvex convergence guarantee for stochastic SAST with an independent second minibatch. We also isolate a local mechanism proposition, stated separately from the unconditional guarantees, that links per sample parameter gradient control to smaller input gradient norms under local Jacobian conditioning. Empirically, we evaluate clean accuracy, hard spike transfer, corruption robustness, and training overhead on N-MNIST and DVS Gesture. The clearest practical effect is transfer gap reduction: on N-MNIST, hard spike accuracy rises from 65.7\% to 94.7\% (best at $\rho=0.30$) while surrogate forward accuracy remains high; on DVS Gesture, hard spike accuracy improves from 31.8\% to 63.3\% (best at $\rho=0.40$). We additionally specify the compute matched, calibration, and theory alignment controls required for a final practical assessment.
\end{abstract}

\section{Introduction}
\noindent Spiking neural networks (SNNs) are attractive for neuromorphic hardware because they naturally support sparse, event based computation \cite{maass1997networks,indiveri2015memory,roy2019towards,merolla2014truenorth,davies2018loihi}. Event based sensors such as dynamic vision sensors (DVS) produce temporally sparse streams that align well with SNN processing and architectur \cite{lichtsteiner2008dvs,gallego2022eventvision}. The central challenge is supervised training: spikes arise from a discontinuous threshold, so standard backpropagation is not directly applicable. Surrogate-gradient methods address this by replacing the derivative of the spike function with a smooth proxy during optimization \cite{bengio2013stochastic,lee2016training,rueckauer2017conversion,neftci_surrogate,zenke_superspike,bellec2018l2l,sengupta2019goingdeeper,eshraghian2023lessons,shrestha2018slayer,wu2018stbp}. A persistent practical issue is the surrogate to hard transfer gap. During training, the surrogate forward network produces graded activations; at deployment, one may replace the surrogate by a hard threshold to obtain a true spiking model. When many membrane potentials lie near threshold, this replacement can change firing decisions sharply. Those local changes then accumulate over time and across layers, so a model that looks accurate under surrogate forward evaluation can perform much worse under hard spike inference. In this paper, we treat that gap as a first class metric rather than an afterthought.

Even with surrogate gradients, multi-step SNN training can remain fragile. Temporal recurrence amplifies sensitivity to perturbations and optimization noise, especially when the unrolled dynamics behave like a stiff recurrent system \cite{werbos1990bptt,pascanu2013difficulty,zenke2021robustness}. A useful viewpoint is to treat the unrolled SNN as a discrete-time nonlinear state-space model: each layer combines a stable linear recurrence with a static nonlinearity and reset feedback. This view makes explicit how gain accumulates across time and layers and motivates asking whether flatter solutions in parameter space also improve robustness in input space. Sharpness aware Minimization (SAM) optimizes a neighborhood worst-case loss and has been effective for improving generalization in conventional deep networks \cite{keskar2017largebatch,dinh2017sharpminima,foret_sam,kwon2021asam,du2021esam}. Existing analyses clarify how SAM relates to sharpness and to gradient-norm regularization \cite{wen2023samsharpness}. The question here is whether the same idea helps surrogate trained SNNs in a way that is both algorithmically simple and analytically defensible.

Our approach is to apply SAM not to a hard spike forward pass with surrogate derivatives in the backward pass, but to a surrogate forward SNN whose forward dynamics already use a smooth spike approximation. This removes the hard forward or surrogate backward mismatch and lets backpropagation through time compute the exact gradient of a smooth objective. As a result, the analysis is tied to the objective that is actually optimized during training, which keeps both assumptions and guarantees technically clean. It also makes the theory appropriately narrow: we analyze the smooth auxiliary model that SAST actually optimizes, rather than claiming guarantees for generic straight-through SNN training.

\subsection{Related work and positioning}
Surrogate-gradient training is the dominant supervised approach for modern SNNs, including multilayer recurrent and convolutional settings trained with backpropagation through time \cite{neftci_surrogate,zenke_superspike,bellec2018l2l,shrestha2018slayer,wu2018stbp,eshraghian2023lessons}. Our paper stays within that family but changes the optimization target: the forward model itself is smoothed, so the gradient is exact for the auxiliary model that is trained. SAM and its variants have mainly been analyzed in conventional deep networks \cite{foret_sam,kwon2021asam,du2021esam,wen2023samsharpness}. We borrow the SAM objective and one-step ascent approximation, but our analysis must account for temporal recurrence, reset feedback, and the surrogate-state dynamics specific to SNNs. We therefore pair the optimization method with explicit state stability and input-Lipschitz bounds for the surrogate forward model. A separate line of work controls sensitivity more directly through gradient penalties or related regularization \cite{drucker1992doublebackprop,ross2018inputgrad,sokolic2017robust,novak2018sensitivity}. We use such methods as diagnostics or baselines rather than as primary objects of analysis. In particular, we treat compute-matched training, ASAM, input gradient regularization, and post-training threshold calibration as important controls when assessing whether SAST offers value beyond cheaper gap-reduction heuristics.

\textbf{Contributions.}
\begin{itemize}
    \item We introduce \emph{sharpness aware Surrogate Training} (SAST), a simple integration of SAM into surrogate forward SNN training \cite{foret_sam,kwon2021asam}.
    \item We formalize a multi-layer leaky integrate-and-fire (LIF) SNN and its smooth surrogate forward counterpart. Training uses the surrogate forward model, while inference may optionally use either the surrogate model or the corresponding hard threshold model.
    \item Under explicit boundedness and contraction assumptions, we derive compact state stability and input Lipschitz bounds, prove smoothness of the surrogate empirical objective, and obtain a first order SAM approximation bound together with a nonconvex convergence guarantee for stochastic SAST.
    \item We isolate a separate local mechanism proposition: under local Jacobian conditioning, smaller per sample parameter gradients upper bound smaller input gradient norms. We present this as a testable explanation for sensitivity trends, not as an unconditional consequence of SAM.
    \item We define a rigorous evaluation protocol centered on surrogate forward accuracy, hard spike accuracy, transfer gap, corruption robustness, hard spike evaluation fairness, compute-matched controls, and theory-to-practice diagnostics such as measured contraction factors and local smoothness secants.
\end{itemize}

\section{Model and Training Objective}
\subsection{Notation}
\noindent We consider sequences $x_{1:T}=(x_1,\dots,x_T)$ with $x_t\in\R^{d_0}$, labels $y\in\{1,\dots,C\}$, and parameters $w\in\R^p$. For a linear operator $A$, $\norm{A}_2$ denotes its spectral norm. Define the geometric temporal factor
\begin{equation}
S_T(\alpha) \defeq \sum_{k=0}^{T-1}\alpha^k = \frac{1-\alpha^T}{1-\alpha},\qquad \alpha\in(0,1).
\label{eq:ST}
\end{equation}
For sequence differences we use
\begin{equation}
\norm{x_{1:T}-x'_{1:T}}_{2,2} \defeq \left(\sum_{t=1}^T \norm{x_t-x'_t}_2^2\right)^{1/2}.
\end{equation}

\subsection{hard spike and surrogate forward SNNs}
We define an $L$-layer SNN unrolled for $T$ time steps. Each layer $\ell$ has state dimension $d_\ell$.

\begin{definition}[hard spike LIF SNN with reset-by-subtraction]
\label{def:lif}
Fix leak $\alpha\in(0,1)$. Let $u_0^{(\ell)}=0$ and $s_0^{(\ell)}=0$. Let $z_t^{(0)}\defeq x_t$. For $\ell=1,\dots,L$ and $t=1,\dots,T$,
\begin{align}
u_t^{(\ell)} &= \alpha u_{t-1}^{(\ell)} + A^{(\ell)} z_t^{(\ell-1)} + b^{(\ell)} - \theta^{(\ell)}\odot s_{t-1}^{(\ell)}, \label{eq:lif_u}\\
s_t^{(\ell)} &= H\!\left(u_t^{(\ell)} - \theta^{(\ell)}\right), \label{eq:lif_s}\\
z_t^{(\ell)} &\defeq s_t^{(\ell)}.
\end{align}
The readout averages the last-layer output over time:
\begin{equation}
o \defeq f_w(x_{1:T}) \defeq W_{\text{out}}\left(\frac{1}{T}\sum_{t=1}^{T} z_t^{(L)}\right)+b_{\text{out}} \in\R^C.
\label{eq:readout}
\end{equation}
\end{definition}

\begin{remark}[theory aligned linear blocks]
In experiments, $A^{(\ell)}$ may represent a composition of convolution, normalization, and pooling \cite{ioffe2015batchnorm}. For the smooth analysis below, we treat each block as a bounded linear map,
\begin{equation}
A^{(\ell)} \defeq P^{(\ell)} B^{(\ell)} W^{(\ell)},
\end{equation}
where $W^{(\ell)}$ is convolution, $B^{(\ell)}$ is the affine scaling part of an inference mode normalization layer, and $P^{(\ell)}$ is average pooling or another linear downsampling operator. Any additive normalization offset is absorbed into $b^{(\ell)}$. If the actual implementation uses MaxPool or training mode BatchNorm, the smoothness theory no longer applies verbatim; those cases are reported as practical extensions rather than direct confirmations of the theory.
\end{remark}

\begin{definition}[Admissible surrogate nonlinearity]
\label{def:admissible}
A surrogate $\sigma:\R\to[0,1]$ is \emph{admissible} if $\sigma\in C^2$, is nondecreasing, and satisfies
\begin{equation}
\sup_{x\in\R}|\sigma'(x)| \le B_1,
\qquad
\sup_{x\in\R}|\sigma''(x)| \le B_2
\label{eq:B1B2}
\end{equation}
for finite constants $B_1,B_2$. Common examples include arctan and fast sigmoid surrogates \cite{neftci_surrogate,zenke_superspike,shrestha2018slayer,wu2018stbp,eshraghian2023lessons}.
\end{definition}

\begin{definition}[surrogate forward SNN]
\label{def:surrogate}
Replace \eqref{eq:lif_s} by
\begin{equation}
\tilde s_t^{(\ell)} = \sigma\!\left(u_t^{(\ell)}-\theta^{(\ell)}\right),
\qquad
\tilde z_t^{(\ell)} \defeq \tilde s_t^{(\ell)},
\label{eq:sur_spike}
\end{equation}
while keeping the membrane recursion \eqref{eq:lif_u} with $z_t^{(\ell)}$ replaced by $\tilde z_t^{(\ell)}$ and $s_{t-1}^{(\ell)}$ replaced by $\tilde s_{t-1}^{(\ell)}$. The readout remains \eqref{eq:readout}, producing logits $\tilde f_w(x_{1:T})$.
\end{definition}

\begin{remark}[Scope of the theory]
\label{rem:train_infer}
Throughout the analysis, SAST trains the surrogate forward network of Definition~\ref{def:surrogate}. In that setting, backpropagation through time computes the exact gradient of the smooth empirical objective. The commonly used hard forward and surrogate backward estimator is generally biased and is not covered by the smoothness or convergence results stated here \cite{bengio2013stochastic,neftci_surrogate}.
\end{remark}

\begin{definition}[hard spike evaluation protocol]
\label{def:hard_eval}
Given trained surrogate forward parameters $w$, \emph{hard spike evaluation} keeps the learned weights, biases, thresholds, leak, reset-by-subtraction rule, time discretization, and readout fixed and replaces $\sigma$ by $H$ in every layer. All hidden states are reset at the start of each input sequence. Unless a calibration baseline is explicitly stated, no post-hoc threshold rescaling, clipping, or temperature fitting is applied.
\end{definition}

\begin{remark}[Why surrogate-to-hard transfer can degrade]
Definition~\ref{def:hard_eval} changes only the spike nonlinearity, but that change can still be large in effect. If many membrane potentials lie close to threshold, a smooth surrogate can output intermediate activations while the hard model must decide between $0$ and $1$. The resulting firing-rate mismatch compounds across time and layers, which is why we report both surrogate forward and hard spike accuracy throughout.
\end{remark}

For each layer $\ell$, let $\Theta^{(\ell)}\defeq \diag(\theta^{(\ell)})$. Under the surrogate forward dynamics,
\begin{align}
    u_t^{(\ell)} &= \alpha u_{t-1}^{(\ell)} + A^{(\ell)} \tilde z_t^{(\ell-1)} + b^{(\ell)} - \Theta^{(\ell)}\tilde z_{t-1}^{(\ell)}, \label{eq:ssm_u}\\
    \tilde z_t^{(\ell)} &= \sigma\!\left(u_t^{(\ell)}-\theta^{(\ell)}\right). \label{eq:ssm_z}
\end{align}
The readout is linear in the time average of the final-layer output,
\begin{equation}
\bar z^{(L)}\defeq \frac{1}{T}\sum_{t=1}^T \tilde z_t^{(L)},
\qquad
\tilde f_w(x_{1:T}) = W_{\mathrm{out}}\bar z^{(L)} + b_{\mathrm{out}}.
\end{equation}
This emphasizes the surrogate forward SNN as a stable affine recurrence composed with a smooth static nonlinearity and reset feedback.

\begin{figure*}[t]
\centering
\resizebox{0.86\textwidth}{!}{%
\begin{tikzpicture}[
  font=\footnotesize,
  box/.style={draw, rounded corners, align=center, inner sep=4pt, minimum height=9mm, text width=2.7cm},
  arr/.style={-Latex, thick},
  node distance=0.7cm
]
\node[box] (x) {Input sequence\\$x_{1:T}$};
\node[box, right=of x] (snn) {Unrolled $L$-layer\\surrogate LIF SNN};
\node[box, right=of snn] (avg) {Time average\\$\bar z=\frac{1}{T}\sum_{t=1}^T z_t^{(L)}$};
\node[box, right=of avg] (out) {Readout\\$o=W_{\text{out}}\bar z+b_{\text{out}}$};
\draw[arr] (x) -- (snn);
\draw[arr] (snn) -- (avg);
\draw[arr] (avg) -- (out);
\end{tikzpicture}%
}
\caption{Overview of the surrogate forward SNN used during training. At inference, the final trained model can be evaluated either with the surrogate nonlinearity or by replacing it with a hard threshold according to Definition~\ref{def:hard_eval}.}
\label{fig:network_overview}
\end{figure*}
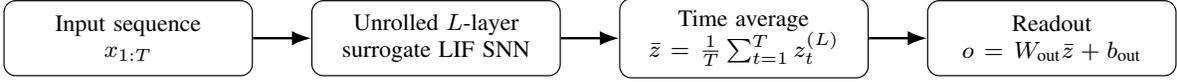

\subsection{Loss and empirical objective}
\noindent Let $\ell(o,y)$ denote multiclass cross-entropy,
\begin{equation}
\ell(o,y) \defeq -o_y + \log\!\left(\sum_{c=1}^C e^{o_c}\right).
\end{equation}
The empirical surrogate objective is
\begin{equation}
\tilde L_S(w) \defeq \frac{1}{n}\sum_{i=1}^n \ell\big(\tilde f_w(x^{(i)}_{1:T}),y^{(i)}\big).
\label{eq:emp_obj}
\end{equation}

\section{Method: sharpness aware Surrogate Training}
SAST applies SAM to the surrogate objective \eqref{eq:emp_obj} \cite{foret_sam,kwon2021asam}. In both SAM passes, the loss is evaluated by unrolling the surrogate forward dynamics and backpropagating through time:
\begin{equation}
\tilde L_{\mathrm{SAM}}(w)\defeq \max_{\norm{\epsilon}_2\le\rho}\tilde L_S(w+\epsilon).
\end{equation}
Algorithm~\ref{alg:sast} uses the standard one-step ascent approximation.

\paragraph{Implementation details used throughout.}
\noindent Unless noted otherwise, the experimental implementation matches the theory as closely as possible.
\begin{itemize}
    \item \textbf{surrogate forward training.} Use a smooth surrogate in the \emph{forward} pass, not only in the backward pass. In theory aligned runs we use the arctan surrogate
    \begin{equation}
        \sigma(x)=\frac{1}{2}+\frac{1}{\pi}\arctan(kx), \qquad k>0,
        \label{eq:arctan_surrogate}
    \end{equation}
    which satisfies Definition~\ref{def:admissible} with
    \begin{equation}
        B_1=\frac{k}{\pi},
        \qquad
        B_2=\frac{3\sqrt{3}}{8\pi}k^2.
        \label{eq:arctan_B1B2}
    \end{equation}
    Because the theory requires the contraction condition $\alpha+M_\theta B_1<1$ below, the surrogate slope $k$ is reported explicitly. theory aligned runs should either satisfy that condition under the observed threshold bound or else be labeled as practical extensions.

    \item \textbf{SAM state handling.} Reset all SNN states at the start of each SAM pass. This matters because the first perturbation pass changes the parameter vector.

    \item \textbf{BPTT policy.} Use full BPTT over the chosen window length $T$ unless a truncated variant is itself part of the experimental study.

    \item \textbf{Second minibatch.} For the convergence theorem, use an independent second minibatch $B'\neq B$ when computing the SAM update direction. If $B'=B$ is used in practice, report it as a separate ablation rather than silently substituting it.

    \item \textbf{No hidden hard-evaluation calibration.} Unless a calibration baseline is explicitly reported, hard spike evaluation follows Definition~\ref{def:hard_eval} exactly, with no threshold rescaling or test-time fitting.

    \item \textbf{Theory-to-practice diagnostics.} For theorem-aligned runs, report the observed threshold bound $\hat M_\theta\defeq \max_{\ell,k}\norm{\theta_k^{(\ell)}}_\infty$, the implied contraction diagnostic $\hat\gamma\defeq \alpha + \hat M_\theta B_1$, and at least one empirical local smoothness diagnostic around trained checkpoints.

    \item \textbf{Evaluation modes.} Report both surrogate forward accuracy $\tilde f_w$ and hard spike accuracy $f_w$ to quantify the surrogate-to-hard transfer gap.
\end{itemize}

\begin{algorithm}[t]
\caption{SAST: SAM for surrogate forward SNN training}
\label{alg:sast}
\begin{algorithmic}[1]
\STATE \textbf{Input:} parameters $w$, optimizer $\mathcal{O}$, radius $\rho$, minibatches $B=(x_{1:T},y)$ and $B'=(x'_{1:T},y')$
\STATE Compute surrogate loss $\tilde L_B(w)$ and gradient $g \leftarrow \nabla_w \tilde L_B(w)$
\STATE Set $\epsilon \leftarrow \rho\, g / (\norm{g}_2 + \delta)$
\STATE Form perturbed parameters $w' \leftarrow w + \epsilon$
\STATE Reset all SNN states and compute $g' \leftarrow \nabla_w \tilde L_{B'}(w')$
\STATE Update $w \leftarrow \mathcal{O}(w,g')$
\end{algorithmic}
\end{algorithm}

\section{Theory in the Main Paper}
\noindent We keep the main text focused on the practical guarantees that support the method: explicit state stability, an input Lipschitz bound, smoothness of the surrogate objective, a first order SAM approximation, and a stochastic convergence guarantee. Conservative closed form constants, the local Jacobian mechanism proposition, and proof sketches appear in Appendix~\ref{app:theory}.

\subsection{Setup assumptions}
\begin{assumption}[Bounded inputs]
\label{ass:inputs}
$\norm{x_t}_2 \le R_x$ for all $t$ and all samples.
\end{assumption}

\begin{assumption}[Bounded blocks and thresholds]
\label{ass:weights}
There exist constants $M_A,M_b,M_\theta,M_{\text{out}},M_{b,\text{out}}$ such that for all layers,
\begin{equation}
\begin{aligned}
\norm{A^{(\ell)}}_2 &\le M_A, &
\norm{b^{(\ell)}}_2 &\le M_b, &
\norm{\theta^{(\ell)}}_\infty &\le M_\theta, \\
\norm{W_{\text{out}}}_2 &\le M_{\text{out}}, &
\norm{b_{\text{out}}}_2 &\le M_{b,\text{out}}.
\end{aligned}
\end{equation}
If the decomposition $A^{(\ell)}=P^{(\ell)}B^{(\ell)}W^{(\ell)}$ is used, assume additionally that $\norm{B^{(\ell)}}_2$ is uniformly bounded and $\norm{P^{(\ell)}}_2\le 1$. In practice these conditions can be encouraged through weight decay, spectral normalization, or explicit operator-norm control \cite{miyato2018spectral,gouk2021lipschitz}.
\end{assumption}

\begin{assumption}[Direct parameterization]
\label{ass:param}
Each trainable block is directly parameterized by its entries. Concretely, $A^{(\ell)}$, $b^{(\ell)}$, $\theta^{(\ell)}$, $W_{\text{out}}$, and $b_{\text{out}}$ are identified with their vectorizations, and the Euclidean parameter norm is the corresponding Frobenius / $\ell_2$ norm.
\end{assumption}

\begin{assumption}[One-step contraction]
\label{ass:gamma}
Let
\begin{equation}
\gamma \defeq \alpha + M_\theta B_1.
\end{equation}
Assume $\gamma<1$.
\end{assumption}

\begin{remark}
Assumption~\ref{ass:gamma} is a sufficient contraction condition for the one-step surrogate state update because $\norm{\diag(\theta^{(\ell)})}_2=\norm{\theta^{(\ell)}}_\infty$. The resulting bounds are conservative, but they make the temporal gain finite and explicit.
\end{remark}

\begin{remark}[Run-level contraction diagnostic]
The theory uses the worst-case constant $M_\theta$. In experiments, we recommend also reporting the observed quantity
\begin{equation}
\hat M_\theta \defeq \max_{\ell,k}\norm{\theta_k^{(\ell)}}_\infty,
\qquad
\hat\gamma \defeq \alpha + \hat M_\theta B_1,
\end{equation}
where $k$ indexes saved checkpoints or training iterations. This does not prove tightness of the theorem, but it does reveal whether the sufficient contraction condition is numerically plausible for the trained theorem-aligned runs.
\end{remark}

For cross-entropy, no additional loss-regularity assumption is needed. The logit gradient and Hessian are uniformly bounded: for all $o\in\R^C$ and labels $y$,
\begin{equation}
\begin{aligned}
\norm{\nabla_o \ell(o,y)}_2 &\le \sqrt{2},\\
\norm{\nabla_o \ell(o,y)}_1 &\le 2,\\
\norm{\nabla_o^2 \ell(o,y)}_2 &\le \frac{1}{2}.
\end{aligned}
\label{eq:ce_bounds}
\end{equation}

\subsection{State stability, smoothness, and robustness}
\begin{proposition}[State stability and input Lipschitz continuity]
\label{prop:state_and_input}
Under Assumptions~\ref{ass:inputs}--\ref{ass:gamma} and Definition~\ref{def:surrogate}, the surrogate states are uniformly bounded. In particular, for every layer $\ell$ and time $t$,
\begin{equation}
\norm{u_t^{(\ell)}}_2 \le R_u^{(\ell)},
\end{equation}
with one valid explicit choice
\begin{equation}
R_u^{(\ell)} \defeq S_T(\alpha)\Big(M_A R_z^{(\ell-1)} + M_b + M_\theta \sqrt{d_\ell}\Big),
\label{eq:Ru_main}
\end{equation}
where $R_z^{(0)}\defeq R_x$ and $R_z^{(\ell-1)}\defeq \sqrt{d_{\ell-1}}$ for $\ell\ge 2$.
Consequently, the surrogate readout is input-Lipschitz:
\begin{equation}
\norm{\tilde f_w(x_{1:T}) - \tilde f_w(x'_{1:T})}_2
\le L_x\,\norm{x_{1:T}-x'_{1:T}}_{2,2},
\end{equation}
with one valid explicit choice
\begin{equation}
L_x \defeq M_{\mathrm{out}}\,\frac{\left(B_1 M_A S_T(\gamma)\right)^L}{\sqrt{T}}.
\label{eq:Lx_main}
\end{equation}
\end{proposition}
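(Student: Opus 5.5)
We prove the two claims separately. Both rely on the fact that $\sigma$ takes values in $[0,1]$, so every surrogate output satisfies $\norm{\tilde z_t^{(\ell)}}_2\le\sqrt{d_\ell}$. This justifies the choice $R_z^{(\ell-1)}=\sqrt{d_{\ell-1}}$ for $\ell\ge2$, while $R_z^{(0)}=R_x$ comes from Assumption~\ref{ass:inputs}.

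\emph{State bound.} Start from \eqref{eq:ssm_u} with $u_0^{(\ell)}=0$ and $\tilde z_0^{(\ell)}=0$, and unroll the linear leak:
\begin{equation*}
u_t^{(\ell)}=\sum_{k=0}^{t-1}\alpha^k\Big(A^{(\ell)}\tilde z_{t-k}^{(\ell-1)}+b^{(\ell)}-\Theta^{(\ell)}\tilde z_{t-k-1}^{(\ell)}\Big).
\end{equation*}
Bound each forcing term by the triangle inequality. Assumption~\ref{ass:weights} gives $\norm{A^{(\ell)}\tilde z^{(\ell-1)}}_2\le M_AR_z^{(\ell-1)}$ and $\norm{b^{(\ell)}}_2\le M_b$. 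For the reset term, $\norm{\Theta^{(\ell)}\tilde z^{(\ell)}}_2\le\norm{\theta^{(\ell)}}_\infty\sqrt{d_\ell}\le M_\theta\sqrt{d_\ell}$. Summing the geometric weights gives $\sum_{k<t}\alpha^k\le S_T(\alpha)$, which yields \eqref{eq:Ru_main}. The reset feedback is treated here as a bounded exogenous input rather than as part of the recursion, so $\gamma<1$ is not needed for this step.

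\emph{Lipschitz bound.} Take two input sequences and write $\Delta u_t=u_t-u_t'$ and $\Delta z_t=\tilde z_t-\tilde z_t'$ per layer. By the mean value theorem applied coordinatewise, $\Delta z_t^{(\ell)}=D_t\,\Delta u_t^{(\ell)}$ for a diagonal matrix $D_t$ with $0\le D_t\le B_1$. This works because $\sigma$ is nondecreasing and $\theta$ is fixed: the parameters are the same and only the inputs differ. Substituting into the recursion gives
\begin{equation*}
\Delta u_t^{(\ell)}=\big(\alpha I-\Theta^{(\ell)}D_{t-1}\big)\Delta u_{t-1}^{(\ell)}+A^{(\ell)}\Delta z_t^{(\ell-1)}.
\end{equation*}
The one-step transition has norm at most $\alpha+M_\theta B_1=\gamma<1$ by Assumption~\ref{ass:gamma}, so
\begin{equation*}
\norm{\Delta u_t^{(\ell)}}_2\le\sum_{k=0}^{t-1}\gamma^k M_A\norm{\Delta z_{t-k}^{(\ell-1)}}_2.
\end{equation*}
This is a causal convolution of the lower-layer difference sequence with the kernel $(\gamma^k)_{k\ge0}$.

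By Young's inequality, this convolution operator has $\ell_2\to\ell_2$ norm at most $\norm{\gamma^k}_{\ell_1}\le S_T(\gamma)$. Combined with $\norm{\Delta z_t}_2\le B_1\norm{\Delta u_t}_2$, this gives the sequence-norm recursion
\begin{equation*}
\norm{\Delta z^{(\ell)}}_{2,2}\le B_1M_AS_T(\gamma)\,\norm{\Delta z^{(\ell-1)}}_{2,2}.
\end{equation*}
Iterating this from $\ell=1$, where $\Delta z^{(0)}=x-x'$, up to $\ell=L$ gives the factor $(B_1M_AS_T(\gamma))^L$.

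At the readout, the triangle inequality and Cauchy--Schwarz give
\begin{equation*}
\norm{\bar z-\bar z'}_2\le\frac1T\sum_t\norm{\Delta z_t^{(L)}}_2\le\frac{1}{\sqrt T}\norm{\Delta z^{(L)}}_{2,2}.
\end{equation*}
Multiplying by $M_{\mathrm{out}}$ yields \eqref{eq:Lx_main}.

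The main obstacle is the Lipschitz step. The reset term couples $\Delta u$ to itself through the time-varying, state-dependent diagonal $D_{t-1}$, so the transition must be bounded uniformly over all possible $D$ rather than linearized. In addition, the per-time bounds must be converted into a $(2,2)$ sequence bound without losing a factor of $\sqrt T$; Young's convolution inequality is the tool intended for that conversion.
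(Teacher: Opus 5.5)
Your proof is correct and follows the paper's own argument. The state bound comes from unrolling the $\alpha$-leak with the reset term treated as a bounded forcing input. The Lipschitz bound comes from the one-step recursion $\norm{\Delta u_t^{(\ell)}}_2\le\gamma\norm{\Delta u_{t-1}^{(\ell)}}_2+M_A\norm{\Delta z_t^{(\ell-1)}}_2$, Young's convolution inequality, layer-by-layer iteration, and Cauchy--Schwarz at the time-averaged readout. Your coordinatewise mean-value step, which gives the exact transition $\alpha I-\Theta^{(\ell)}D_{t-1}$, is simply a more explicit justification of the paper's appeal to the one-step Jacobian bounds of Lemma~\ref{lem:local_jac}.
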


\begin{theorem}[Smoothness of the empirical surrogate objective]
\label{thm:smooth_main}
Under Assumptions~\ref{ass:inputs}--\ref{ass:param}, Assumption~\ref{ass:gamma}, and Definition~\ref{def:admissible}, there exist explicit constants $L_w$ and $H_w$ such that
\begin{equation}
\begin{aligned}
\sup_{x_{1:T}} \norm{J_w \tilde f_w(x_{1:T})}_2 &\le L_w,\\
\sup_{x_{1:T}} \max_{1\le c\le C}\norm{\nabla_w^2 \tilde f_{w,c}(x_{1:T})}_2 &\le H_w.
\end{aligned}
\label{eq:LwHw_main}
\end{equation}
Hence the empirical surrogate objective $\tilde L_S$ is $\beta$-smooth with
\begin{equation}
\begin{aligned}
\norm{\nabla \tilde L_S(w)-\nabla \tilde L_S(w')}_2 &\le \beta \norm{w-w'}_2,\\
\beta &\defeq \frac{1}{2}L_w^2 + 2H_w.
\end{aligned}
\label{eq:beta_main}
\end{equation}
Appendix~\ref{app:explicit_constants} gives conservative closed forms for $L_w$ and $H_w$ and makes their dependence on depth, temporal gain, and surrogate slope explicit.
\end{theorem}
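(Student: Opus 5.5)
The proof has two parts. First we show the readout $\tilde f_w$ has uniformly bounded first and second parameter derivatives. Then we transfer these bounds to the empirical loss through the chain rule, using the cross-entropy bounds \eqref{eq:ce_bounds}.

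\textbf{Step 1 (reduction to $L_w$, $H_w$).} For a single sample write $g(w)=\ell(\tilde f_w(x),y)$. Then
\begin{equation*}
\nabla^2 g = J_w\tilde f_w^\top\,\nabla_o^2\ell\,J_w\tilde f_w + \sum_{c=1}^C \partial_{o_c}\ell\,\nabla_w^2\tilde f_{w,c}.
\end{equation*}
By \eqref{eq:ce_bounds}, the first term has spectral norm at most $\tfrac12 L_w^2$. The second is at most $\norm{\nabla_o\ell}_1\max_c\norm{\nabla_w^2\tilde f_{w,c}}_2\le 2H_w$. Hence $\sup_w\norm{\nabla^2 g}_2\le\beta$ uniformly in the sample. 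Averaging over the $n$ samples preserves the bound, and the mean value theorem along the segment $[w,w']$ in $\R^p$ gives the Lipschitz gradient estimate \eqref{eq:beta_main}. Note that no restriction of $w$ to a bounded set is needed here, since the suprema in \eqref{eq:LwHw_main} are taken over the parameter set allowed by Assumption~\ref{ass:weights}. I would remark that the smoothness statement is to be read on that (convex) parameter set.

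\textbf{Step 2 (first-order sensitivities).} Differentiate the recursion \eqref{eq:ssm_u}--\eqref{eq:ssm_z} with respect to any parameter direction $v$ with $\norm{v}_2=1$. Write $\delta u_t^{(\ell)}$ for the directional derivative of $u_t^{(\ell)}$. It satisfies
\begin{equation*}
\delta u_t^{(\ell)}=\alpha\,\delta u_{t-1}^{(\ell)} - \Theta^{(\ell)} D_{t-1}^{(\ell)}\delta u_{t-1}^{(\ell)} + A^{(\ell)}D_t^{(\ell-1)}\delta u_t^{(\ell-1)} + r_t^{(\ell)},
\end{equation*}
where $D_t^{(\ell)}=\diag(\sigma'(u_t^{(\ell)}-\theta^{(\ell)}))$ has norm at most $B_1$. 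The forcing term $r_t^{(\ell)}$ collects the explicit derivatives with respect to $A^{(\ell)},b^{(\ell)},\theta^{(\ell)}$. By Assumption~\ref{ass:param}, $\norm{(\delta A)z}_2\le\norm{\delta A}_F\norm{z}_2$, and the states are bounded by Proposition~\ref{prop:state_and_input} ($\norm{\tilde z}_2\le\sqrt{d_\ell}$, $\norm{u}_2\le R_u^{(\ell)}$). Together these give $\norm{r_t^{(\ell)}}_2\le c_\ell$ for an explicit constant $c_\ell$. The threshold also appears inside $\sigma$, which contributes an extra $-D\,\delta\theta$ term. The homogeneous part contracts with factor $\gamma<1$ by Assumption~\ref{ass:gamma}, so a discrete Gr\"onwall / geometric-sum argument gives
\begin{equation*}
\sup_t\norm{\delta u_t^{(\ell)}}_2\le S_T(\gamma)\big(c_\ell + M_A B_1\sup_t\norm{\delta u_t^{(\ell-1)}}_2\big).
\end{equation*}
Inducting over layers yields a bound of the form $\sum_\ell (B_1M_AS_T(\gamma))^{L-\ell}S_T(\gamma)c_\ell$. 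The readout adds $\norm{\bar z^{(L)}}_2+1$ from $W_{\mathrm{out}},b_{\mathrm{out}}$ and the factor $M_{\mathrm{out}}B_1$. This defines $L_w$.

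\textbf{Step 3 (second-order sensitivities), the main obstacle.} Differentiate the linearized recursion once more in a second direction $v'$. The second variation $\delta^2u_t^{(\ell)}$ obeys the \emph{same} contractive homogeneous recursion. Its forcing term now contains products of first variations: terms like $\diag(\sigma''(\cdot))\,\delta u\,\delta' u$ (bounded using $B_2$ and $\norm{a\odot b}_2\le\norm{a}_2\norm{b}_2$), cross terms $(\delta A)D\,\delta' u$, $(\delta\Theta) D\,\delta' u$ and $\Theta\,\diag(\sigma'')\,\delta u\,\delta' u$, plus the analogous inter-layer products. Each is bounded by products of the Step 2 constants, so the same geometric-sum-plus-layer-induction gives a uniform bound. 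The readout contributes $W_{\mathrm{out}}$ times the second variation of $\bar z^{(L)}$ plus the bilinear term $\delta W_{\mathrm{out}}\,\delta'\bar z^{(L)}$. Taking the supremum over unit $v,v'$ gives $\max_c\norm{\nabla^2_w\tilde f_{w,c}}_2\le H_w$.

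The hard part is the bookkeeping in this step. The forcing terms must be organized so that every product is bounded uniformly in $t$, which depends on the first-variation bounds being time-uniform. Care is also needed with the threshold's double role, as reset gain and as shift inside $\sigma$. I would handle this by treating $(u_t^{(\ell)},\tilde z_{t-1}^{(\ell)})$ jointly as the state, and then bounding all mixed partials of the one-step map $F(u,z_{\mathrm{prev}},z_{\mathrm{in}};A,b,\theta)$ by constants. With that done, the argument reduces to a generic lemma: a contractive recursion with bounded first and second derivatives has uniformly bounded first and second parameter sensitivities.
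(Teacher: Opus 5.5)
Your Step~1 is the paper's entire proof. The paper writes the per-sample Hessian as $(J_w\tilde f)^\top\nabla_o^2\ell\,(J_w\tilde f)+\sum_c\partial_{o_c}\ell\,\nabla_w^2\tilde f_c$, bounds it by $\tfrac12 L_w^2+2H_w$ using \eqref{eq:ce_bounds} and $\norm{\nabla_o\ell}_1\le 2$, and averages over samples. The real difference is your Steps~2--3. The paper never derives \eqref{eq:LwHw_main}; it only states closed forms in Lemma~\ref{lem:Jw_Hw_appendix}. You prove the bounds with first- and second-order forward sensitivity recursions. Their homogeneous part $\alpha I-\Theta^{(\ell)}D_{t-1}^{(\ell)}$ has norm at most $\gamma$, so you sum geometrically in time and induct over layers. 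This gives an actual proof, and your constants have the right shape: additive readout contributions ($\norm{\bar z^{(L)}}_2+1$ from $W_{\mathrm{out}},b_{\mathrm{out}}$ at first order, and the bilinear $\delta W_{\mathrm{out}}\,\delta'\bar z^{(L)}$ terms at second order) on top of $M_{\mathrm{out}}B_1$ times the propagated sensitivity. Do not try to recover the appendix formulas. The stated $L_w$ carries an overall factor $M_{\mathrm{out}}(B_1M_AS_T(\gamma))^L/\sqrt{T}$, which tends to $0$ as $T\to\infty$ (or as $M_A\to 0$ or $M_{\mathrm{out}}\to 0$). Yet the $b_{\mathrm{out}}$-block of $J_w\tilde f_w$ alone has operator norm $1$. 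So that closed form is not a valid bound in general, and what your argument establishes is the theorem's existence claim, which is its actual mathematical content. Your remark on the domain of the smoothness statement is also something the paper's sketch passes over silently. The Hessian bound holds only on the admissible set of Assumption~\ref{ass:weights}, which is an intersection of norm balls and hence convex, so the mean value theorem along $[w,w']$ is legitimate. State it as the restriction it is, rather than saying no restriction to a bounded set is needed.
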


\begin{remark}[How to interpret the constants]
The explicit formulas for $L_w$, $H_w$, and $\beta$ are intended to expose qualitative dependence on depth, temporal gain, and surrogate slope. They need not be numerically tight. We therefore recommend pairing the theory with empirical local smoothness diagnostics such as secant estimates around trained checkpoints rather than interpreting the bound as a precise predictor of optimization curvature.
\end{remark}

\begin{corollary}[first order SAM view]
\label{cor:sam_first_order}
If $\tilde L_S$ is $\beta$-smooth, then for any $w$,
\begin{equation}
\tilde L_{\mathrm{SAM}}(w)
\le \tilde L_S(w) + \rho \norm{\nabla \tilde L_S(w)}_2 + \frac{\beta\rho^2}{2}.
\label{eq:sam_upper_main}
\end{equation}
Therefore, for small $\rho$, minimizing $\tilde L_{\mathrm{SAM}}(w)$ approximately minimizes
\begin{equation}
\tilde L_S(w) + \rho \norm{\nabla \tilde L_S(w)}_2,
\end{equation}
up to an additive $O(\beta\rho^2)$ term \cite{foret_sam,wen2023samsharpness}.
\end{corollary}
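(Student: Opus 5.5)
The plan is to bound the objective at every admissible perturbation by the standard descent-lemma (quadratic upper) bound implied by $\beta$-smoothness, and then take the maximum over the ball. Fix $w$ and any $\epsilon$ with $\norm{\epsilon}_2\le\rho$. Because $\nabla\tilde L_S$ is $\beta$-Lipschitz (Theorem~\ref{thm:smooth_main}), we write
\begin{equation}
\tilde L_S(w+\epsilon)-\tilde L_S(w)-\ip{\nabla\tilde L_S(w)}{\epsilon}
=\int_0^1 \ip{\nabla\tilde L_S(w+\tau\epsilon)-\nabla\tilde L_S(w)}{\epsilon}\,d\tau .
\end{equation}
Applying Cauchy--Schwarz inside the integral gives an integrand of at most $\beta\tau\norm{\epsilon}_2^2$, so the remainder is at most $\frac{\beta}{2}\norm{\epsilon}_2^2$. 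This uses only that $\tilde L_S$ is $C^1$, which holds because $\sigma\in C^2$ and cross-entropy is smooth.

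Next we control the linear term by Cauchy--Schwarz:
\begin{equation}
\ip{\nabla\tilde L_S(w)}{\epsilon}\le\rho\norm{\nabla\tilde L_S(w)}_2 ,
\end{equation}
and the quadratic term by $\norm{\epsilon}_2^2\le\rho^2$. Combining these, for every $\epsilon$ in the ball,
\begin{equation}
\tilde L_S(w+\epsilon)\le \tilde L_S(w)+\rho\norm{\nabla \tilde L_S(w)}_2+\frac{\beta\rho^2}{2}.
\end{equation}
The right-hand side does not depend on $\epsilon$, so taking the supremum over $\norm{\epsilon}_2\le\rho$ yields \eqref{eq:sam_upper_main}. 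The maximum is attained because the ball is compact and $\tilde L_S$ is continuous.

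For the ``approximately minimizes'' statement we also need a matching lower bound. Choose $\epsilon^\star=\rho\nabla\tilde L_S(w)/\norm{\nabla\tilde L_S(w)}_2$ when the gradient is nonzero, and any $\epsilon$ otherwise. The same smoothness argument, now using the lower quadratic bound, gives
\begin{equation}
\tilde L_{\mathrm{SAM}}(w)\ge\tilde L_S(w+\epsilon^\star)\ge\tilde L_S(w)+\rho\norm{\nabla\tilde L_S(w)}_2-\frac{\beta\rho^2}{2}.
\end{equation}
Hence $\bigl|\tilde L_{\mathrm{SAM}}(w)-\tilde L_S(w)-\rho\norm{\nabla\tilde L_S(w)}_2\bigr|\le\beta\rho^2/2$ holds uniformly in $w$.

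This two-sided uniform estimate lets us compare minimizers. A minimizer of either objective is a $\beta\rho^2$-approximate minimizer of the other, and this is exactly the $O(\beta\rho^2)$ claim. No step here is a genuine obstacle. The only care needed is to make sure the smoothness constant from Theorem~\ref{thm:smooth_main} holds globally on the segment $[w,w+\epsilon]$. It does, because the constants $L_w$ and $H_w$ depend only on the uniform bounds of Assumption~\ref{ass:weights}, so we implicitly require $w+\epsilon$ to remain in the bounded parameter set.
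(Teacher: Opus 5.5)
Your argument for \eqref{eq:sam_upper_main} is the paper's argument. You use the quadratic upper bound from $\beta$-smoothness, apply Cauchy--Schwarz to the linear term with $\norm{\epsilon}_2^2\le\rho^2$, and take the supremum over the ball; the integral form of the descent lemma only makes the smoothness step explicit. Your lower bound at $\epsilon^\star=\rho\nabla\tilde L_S(w)/\norm{\nabla\tilde L_S(w)}_2$ is correct and goes beyond the paper, which only asserts the ``approximately minimizes'' clause: the two-sided estimate $\bigl|\tilde L_{\mathrm{SAM}}(w)-\tilde L_S(w)-\rho\norm{\nabla\tilde L_S(w)}_2\bigr|\le\beta\rho^2/2$ is what actually justifies the $O(\beta\rho^2)$ claim. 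Your closing caveat about keeping the $\rho$-ball inside the bounded parameter set matters only when $\beta$ is taken from Theorem~\ref{thm:smooth_main}; the corollary itself assumes global $\beta$-smoothness.
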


\begin{assumption}[Stochastic gradients and independent second pass]
\label{ass:sg}
Let $g_B(w)=\nabla \tilde L_B(w)$ denote the minibatch gradient. Assume
\begin{equation}
\E[g_B(w)] = \nabla \tilde L_S(w),
\qquad
\E\norm{g_B(w)-\nabla \tilde L_S(w)}_2^2 \le \sigma^2.
\end{equation}
In SAST, assume the minibatch $B_k$ used to form $\epsilon_k$ is independent of the second minibatch $B_k'$ used to compute $g_{B_k'}(w_k+\epsilon_k)$.
\end{assumption}

\begin{theorem}[Nonconvex convergence of stochastic SAST]
\label{thm:sast_conv}
Assume $\tilde L_S$ is $\beta$-smooth and bounded below by $\tilde L^\star$. Let
\begin{equation}
w_{k+1} = w_k - \eta g_{B_k'}(w_k+\epsilon_k),
\qquad
\epsilon_k = \rho \frac{g_{B_k}(w_k)}{\norm{g_{B_k}(w_k)}_2+\delta}.
\end{equation}
If $\eta\le 1/(4\beta)$, then
\begin{equation}
\frac{1}{K}\sum_{k=0}^{K-1}\E\norm{\nabla \tilde L_S(w_k)}_2^2
\le
\frac{4(\tilde L_S(w_0)-\tilde L^\star)}{\eta K}
+ 3\beta^2\rho^2
+ 2\eta\beta\sigma^2.
\label{eq:conv_rate}
\end{equation}
The proof follows a standard smooth nonconvex stochastic-optimization template, with the SAM perturbation contributing the additive $O(\beta^2\rho^2)$ term \cite{bottou2018optimization}.
\end{theorem}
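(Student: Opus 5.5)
The argument is the standard descent-lemma analysis for smooth nonconvex SGD. The SAM perturbation is handled as a bounded bias in the gradient, and the independent second minibatch is what keeps the noise term clean.

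Write $g_k \defeq \nabla \tilde L_S(w_k)$, $\tilde w_k \defeq w_k+\epsilon_k$ and $h_k \defeq g_{B_k'}(\tilde w_k)$. By construction $\norm{\epsilon_k}_2\le\rho$ deterministically, for any value of $g_{B_k}(w_k)$, including the case $\delta>0$ with a zero gradient.

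\textbf{Step 1: descent inequality.} By $\beta$-smoothness,
\begin{equation*}
\tilde L_S(w_{k+1}) \le \tilde L_S(w_k) - \eta\ip{g_k}{h_k} + \frac{\beta\eta^2}{2}\norm{h_k}_2^2 .
\end{equation*}

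\textbf{Step 2: take expectations.} Condition on $\mathcal F_k$, the history up to $w_k$ together with $B_k$. Then $\epsilon_k$ and $\tilde w_k$ are fixed. Because $B_k'$ is independent of $B_k$ (Assumption~\ref{ass:sg}), we have $\E[h_k\mid\mathcal F_k]=\nabla\tilde L_S(\tilde w_k)$ and $\E[\norm{h_k}_2^2\mid\mathcal F_k]\le \norm{\nabla\tilde L_S(\tilde w_k)}_2^2+\sigma^2$. This is the only point where independence is used; without it, $\epsilon_k$ would correlate with the noise in $h_k$.

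\textbf{Step 3: bound the bias.} Let $e_k\defeq\nabla\tilde L_S(\tilde w_k)-g_k$. Smoothness gives $\norm{e_k}_2\le\beta\rho$. Then
\begin{equation*}
\ip{g_k}{g_k+e_k} \ge \norm{g_k}_2^2-\norm{g_k}_2\beta\rho \ge \tfrac12\norm{g_k}_2^2-\tfrac12\beta^2\rho^2,
\end{equation*}
and
\begin{equation*}
\norm{g_k+e_k}_2^2\le 2\norm{g_k}_2^2+2\beta^2\rho^2 .
\end{equation*}
Substituting these into Step 1 and using $\eta\le 1/(4\beta)$, so that $\beta\eta^2\le\eta/4$, gives
\begin{equation*}
\E\tilde L_S(w_{k+1}) \le \E\tilde L_S(w_k) - \frac{\eta}{4}\E\norm{g_k}_2^2 + \frac{3\eta}{4}\beta^2\rho^2 + \frac{\beta\eta^2}{2}\sigma^2 .
\end{equation*}
The gradient coefficient is $-\eta/2+\beta\eta^2\le -\eta/4$. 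The $\beta^2\rho^2$ coefficient is $\eta/2+\beta\eta^2\le 3\eta/4$.

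\textbf{Step 4: telescope.} Summing over $k=0,\dots,K-1$, bounding $\tilde L_S(w_K)$ below by $\tilde L^\star$, and multiplying by $4/(\eta K)$ gives
\begin{equation*}
\frac{4(\tilde L_S(w_0)-\tilde L^\star)}{\eta K} + 3\beta^2\rho^2 + 2\eta\beta\sigma^2 ,
\end{equation*}
which is exactly \eqref{eq:conv_rate}.

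The main obstacle is the bookkeeping in Step 3. The Young's-inequality splits and the step-size bound have to be chosen so that the constants come out exactly $4$, $3$ and $2$. The split I plan to use first is the half/half Young's inequality shown above. If its constants do not match, I will rebalance the split, for example using $\ip{a}{b}\ge\frac{3}{4}\norm a^2-\norm b^2$, and recheck each coefficient against the target.
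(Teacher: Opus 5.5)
Your proposal is correct and follows essentially the same route as the paper's proof. Both use the descent lemma, conditional unbiasedness from the independent second minibatch, the bias bound $\norm{e_k}_2\le\beta\rho$ with the same half/half Young split and $\norm{g_k+e_k}_2^2\le 2\norm{g_k}_2^2+2\beta^2\rho^2$, and then $\eta\le 1/(4\beta)$ followed by telescoping; that split already yields the constants $4$, $3$ and $2$ exactly, so the fallback rebalancing you mention is not needed.
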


\begin{proposition}[Loss stability under sequence perturbation]
\label{prop:input_rob_main}
Under Proposition~\ref{prop:state_and_input}, for any $x_{1:T},x'_{1:T}$ and any label $y$,
\begin{equation}
\begin{aligned}
\left|\ell(\tilde f_w(x_{1:T}),y)-\ell(\tilde f_w(x'_{1:T}),y)\right|
&\le \sqrt{2}\,L_x\,\norm{x_{1:T}-x'_{1:T}}_{2,2}.
\end{aligned}
\label{eq:loss_stab_main}
\end{equation}
If $\tilde x_t=m_t\odot x_t$ with independent coordinate drops $\Pr[(m_t)_j=0]=p$, then under Assumption~\ref{ass:inputs},
\begin{equation}
\E\!\left[\norm{x_{1:T}-\tilde x_{1:T}}_{2,2}\right] \le \sqrt{pT}\,R_x,
\end{equation}
and therefore
\begin{equation}
\E\!\left|\ell(\tilde f_w(x_{1:T}),y)-\ell(\tilde f_w(\tilde x_{1:T}),y)\right|
\le \sqrt{2}\,L_x\sqrt{pT}\,R_x.
\end{equation}
\end{proposition}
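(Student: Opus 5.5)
The argument has three pieces: the first inequality follows from the Lipschitz bound on cross-entropy combined with Proposition~\ref{prop:state_and_input}, the dropout expectation is a short Jensen computation, and the last inequality chains the two.

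\textbf{Loss stability.} Fix $y$. By \eqref{eq:ce_bounds}, $\norm{\nabla_o \ell(o,y)}_2\le\sqrt{2}$ for every $o\in\R^C$. The map $o\mapsto\ell(o,y)$ is smooth on all of $\R^C$, so the mean value theorem along the segment joining $o=\tilde f_w(x_{1:T})$ and $o'=\tilde f_w(x'_{1:T})$ gives
\[
|\ell(o,y)-\ell(o',y)|\le\sqrt{2}\,\norm{o-o'}_2 .
\]
Proposition~\ref{prop:state_and_input} bounds $\norm{o-o'}_2$ by $L_x\norm{x_{1:T}-x'_{1:T}}_{2,2}$. Substituting this yields \eqref{eq:loss_stab_main}.

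\textbf{Dropout expectation.} Write $x_t-\tilde x_t=(1-m_t)\odot x_t$, where each $(1-m_t)_j$ is a Bernoulli($p$) variable taking values in $\{0,1\}$. Then
\[
\E\norm{x_{1:T}-\tilde x_{1:T}}_{2,2}^2=\sum_{t}\sum_j p\,(x_t)_j^2=p\sum_t\norm{x_t}_2^2\le pTR_x^2,
\]
using Assumption~\ref{ass:inputs} for the last step. Since the square root is concave, Jensen's inequality gives
\[
\E\norm{\cdot}_{2,2}\le\sqrt{\E\norm{\cdot}_{2,2}^2}\le\sqrt{pT}\,R_x .
\]
This step needs only that each coordinate is dropped with marginal probability $p$; independence is not used.

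\textbf{Combining.} Apply the pointwise bound \eqref{eq:loss_stab_main} with $x'_{1:T}=\tilde x_{1:T}$ for each realization of the masks. The constant $L_x$ depends only on $w$ and the model constants, not on the input, so taking expectations gives
\[
\E\left|\ell(\tilde f_w(x_{1:T}),y)-\ell(\tilde f_w(\tilde x_{1:T}),y)\right|\le\sqrt{2}\,L_x\,\E\norm{x_{1:T}-\tilde x_{1:T}}_{2,2}.
\]
The dropout bound then gives the claim.

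\textbf{Point to check.} The only place care is needed is whether Proposition~\ref{prop:state_and_input} applies when the perturbed input $\tilde x_t$ is fed to the network. Since $\norm{\tilde x_t}_2\le\norm{x_t}_2\le R_x$, the masked sequence still satisfies Assumption~\ref{ass:inputs}. Moreover, the Lipschitz constant $L_x$ in \eqref{eq:Lx_main} contains no dependence on $R_x$. So the pointwise bound holds for every mask realization, and the argument reduces to the routine estimates above.
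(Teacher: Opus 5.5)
Your proof is correct and follows the same route as the paper. The paper also gets the loss bound from the $\sqrt{2}$ logit-gradient bound of cross-entropy composed with the input-Lipschitz bound of Proposition~\ref{prop:state_and_input}, and the dropout bound from Jensen applied to $\E\norm{x_{1:T}-\tilde x_{1:T}}_{2,2}^2 = p\sum_t\norm{x_t}_2^2$. Your extra remarks are valid refinements that the paper's sketch leaves implicit: only the marginal drop probability is used, the masked input still satisfies Assumption~\ref{ass:inputs}, and $L_x$ does not depend on $R_x$.
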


\noindent The main-text theory stops here. Appendix~\ref{app:mechanism} adds a \emph{local} per-sample mechanism proposition: if the parameter-to-logit Jacobian is well conditioned at a given example, then smaller per-sample parameter gradients upper bound smaller input gradient norms. We use that result only to motivate diagnostics; it is not needed for the unconditional guarantees above.

\section{Experiments}
\label{sec:experiments}
\noindent We organize the experiments around five practical questions: (i) Does SAST preserve clean surrogate forward accuracy? (ii) Does it improve hard spike transfer? (iii) Does it improve corruption robustness? (iv) Are the gains larger than what can be explained by extra compute alone? (v) Are the theorem assumptions at least numerically plausible on the runs for which we invoke them?

\subsection{Benchmarks, protocol, and theory alignment}
\noindent We report \textbf{N-MNIST} \cite{nmnist} and \textbf{DVS Gesture} \cite{amir2017lowpower}. N-MNIST makes the surrogate-to-hard transfer gap easy to resolve, while DVS Gesture provides a stronger benchmark. When additional compute is available, CIFAR10-DVS can be included as an appendix extension. Each event stream is converted to a sequence of $T$ frame-like tensors using temporal binning or a standard event-to-frame transform \cite{tonic}. Inputs are normalized to $[0,1]$ per pixel and channel so that Assumption~\ref{ass:inputs} holds with $R_x\le\sqrt{d_0}$.

\noindent Unless otherwise noted, training uses the surrogate forward model in Definition~\ref{def:surrogate}. theory aligned runs use average pooling or strided convolutions and avoid training-mode BatchNorm in the main architecture. The hard spike numbers in the main tables always follow Definition~\ref{def:hard_eval}: the same trained parameters are used, states are reset per sequence, and only the spike nonlinearity is replaced.

\begin{table}[t]
\caption{Which experimental settings are covered by the theory}
\label{tab:theory_alignment}
\centering
\footnotesize
\setlength{\tabcolsep}{2.8pt}
\resizebox{\columnwidth}{!}{%
\begin{tabular}{lccc}
\toprule
Setting & surrogate forward train & Smooth blocks & Covered by theory \\
\midrule
Main N-MNIST / DVS runs & Yes & AvgPool / affine norm & Yes \\
MaxPool / BatchNorm extension & Yes & No & No (practical only) \\
hard forward / surrogate backward baseline & No & Mixed & No \\
hard spike inference only & Training unchanged & -- & No new training guarantee \\
\bottomrule
\end{tabular}%
}
\end{table}

\paragraph{Baselines and fairness.}
Each reported SAST result should be paired with three controls: (1) baseline surrogate forward training without SAM, (2) an ablation with reused second minibatch $B'=B$, and (3) a \emph{compute-matched} baseline trained with the same approximate number of forward/backward passes as SAST. The third control is essential because SAM doubles the gradient-evaluation budget. Additional controls used to interpret the transfer gap are specified in Section~\ref{subsec:required_controls}.

\subsection{Main results}
\noindent Table~\ref{tab:main_results} reports the primary accuracy metrics together with the transfer gap
\begin{equation}
\Delta_{\mathrm{transfer}} \defeq \mathrm{Acc}_{\mathrm{sur}} - \mathrm{Acc}_{\mathrm{hard}}.
\end{equation}
For this paper, the primary empirical goal is to minimize $\Delta_{\mathrm{transfer}}$, while preserving strong surrogate forward clean accuracy.

\begin{table}[t]
\caption{Main accuracy results}
\label{tab:main_results}
\centering
\footnotesize
\setlength{\tabcolsep}{3.0pt}
\resizebox{\columnwidth}{!}{%
\begin{tabular}{llccc}
\toprule
Dataset & Method & Surrogate forward & Hard spike & Transfer gap \\
\midrule
N-MNIST & Baseline surrogate training & 0.9606 $\pm$ 0.0033 & 0.6572 $\pm$ 0.0974 & 0.3034 \\
N-MNIST & SAST ($\rho = 0.02$) & 0.9789 $\pm$ 0.0033 & 0.7453 $\pm$ 0.0251 & 0.2336 \\
N-MNIST & SAST ($\rho = 0.05$) & \textbf{0.9831 $\pm$ 0.0007} & 0.7812 $\pm$ 0.0847 & 0.2019 \\
N-MNIST & SAST ($\rho = 0.10$) & 0.9786 $\pm$ 0.0009 & 0.8335 $\pm$ 0.0894 & 0.1451 \\
N-MNIST & SAST ($\rho = 0.20$) & 0.9753 $\pm$ 0.0013 & 0.9018 $\pm$ 0.0496 & 0.0735 \\
N-MNIST & SAST ($\rho = 0.30$) & 0.9721 $\pm$ 0.0012 & \textbf{0.9473 $\pm$ 0.0462} & \textbf{0.0248} \\
N-MNIST & SAST ($\rho = 0.40$) & 0.9697 $\pm$ 0.0009 & 0.9424 $\pm$ 0.0332 & 0.0273 \\
N-MNIST & SAST ($\rho = 0.50$) & 0.9655 $\pm$ 0.0013 & 0.9318 $\pm$ 0.0323 & 0.0337 \\
\midrule
DVS Gesture & Baseline surrogate training & 0.7502 $\pm$ 0.0142 & 0.3182 $\pm$ 0.0732 & 0.4320 \\
DVS Gesture & SAST ($\rho = 0.02$) & 0.7670 $\pm$ 0.0134 & 0.4489 $\pm$ 0.0716 & 0.3181 \\
DVS Gesture & SAST ($\rho = 0.05$) & 0.7519 $\pm$ 0.0187 & 0.4754 $\pm$ 0.0623 & 0.2765 \\
DVS Gesture & SAST ($\rho = 0.10$) & 0.7727 $\pm$ 0.0107 & 0.5341 $\pm$ 0.0357 & 0.2386 \\
DVS Gesture & SAST ($\rho = 0.20$) & \textbf{0.8087 $\pm$ 0.0043} & 0.5957 $\pm$ 0.0116 & 0.2130 \\
DVS Gesture & SAST ($\rho = 0.30$) & 0.7778 $\pm$ 0.0076 & 0.5926 $\pm$ 0.0151 & 0.1852 \\
DVS Gesture & SAST ($\rho = 0.40$) & 0.7685 $\pm$ 0.0151 & \textbf{0.6327 $\pm$ 0.0116} & \textbf{0.1358} \\
DVS Gesture & SAST ($\rho = 0.50$) & 0.7747 $\pm$ 0.0430 & 0.6022 $\pm$ 0.0191 & 0.1725 \\
\bottomrule
\end{tabular}%
}
\end{table}

The general trend seen in results \ref{tab:main_results} is that minimal returns are seen on the surrogate forward with respect to the extra computation per epoch with the main key notible results being the strongest effect is not better surrogate forward accuracy but a much smaller transfer gap: $\Delta_{\mathrm{transfer}}$ falls from $0.3034$ to $0.0248$, a relative reduction of roughly $91.8\%$, while hard spike accuracy also increases for decreases in $\Delta_{\mathrm{transfer}}$. On DVS Gesture, the same trend remains but is more modest: the transfer gap drops from $0.4320$ to $0.1358$ and hard spike accuracy improves by $31.45$ points. We therefore interpret SAST primarily as a transfer gap reduction method rather than as a guarantee of SOTA hard spike accuracy.

A second observation is seed sensitivity at small radius. On N-MNIST, the hard spike standard deviation at $\rho=0.02$ is substantially larger than subsequently values of $\rho > 0.02$, suggesting that mild perturbation radii may not reliably move all runs into the same flatter regime. To make that behavior transparent, the appendix should report per-seed transfer gaps and per-seed hard spike accuracies, not only means and standard deviations.

\subsection{Robustness to activity drop and temporal corruption}
\noindent Figure~\ref{fig:robustness_placeholder} reports accuracy under activity-drop and temporal corruptions with severity $p\in\{0,0.1,0.2,0.3,0.4\}$ for both surrogate forward and hard spike evaluation. To connect with Proposition~\ref{prop:input_rob_main}, one corruption family uses independent event drop. A second family stresses temporal structure (e.g., time jitter or dropped time bins) using the same severity grid across methods. In addition to pointwise curves, we recommend summarizing each curve by the mean accuracy across severities and by the area under the accuracy--severity curve.

\begin{figure}[t]
\centering
\IfFileExists{chartpng.png}{%
\includegraphics[width=\linewidth]{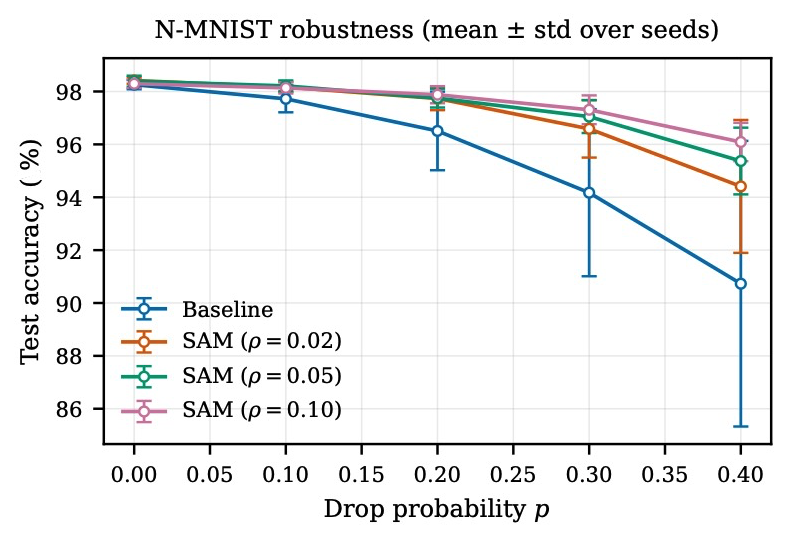}%
}{%
\placeholderbox{Insert robustness plot: accuracy versus corruption severity $p$ for both surrogate forward and hard spike evaluation.}%
}
\caption{Robustness on N-MNIST under random event-drop corruption. Test accuracy (mean $\pm$ std over seeds) is plotted against drop probability $p$ for both surrogate forward and hard spike evaluation; SAST trained models are expected to degrade more gracefully than the baseline as corruption increases, with moderate-to-large radii typically showing the strongest gains at high drop rates.}
\label{fig:robustness_placeholder}
\end{figure}

\subsection{Sharpness, transfer, sensitivity, and theory-to-practice diagnostics}
To support the SAM mechanism discussion, we report at least one sharpness proxy and several held-out diagnostics: (i) the SAM gap $\Delta_\rho=\tilde L(w+\epsilon^*)-\tilde L(w)$ with $\norm{\epsilon^*}_2=\rho$, (ii) the transfer gap $\Delta_{\mathrm{transfer}}$, (iii) the mean per-example parameter-gradient norm $\norm{\nabla_w \ell_i}_2$, and (iv) the mean per-example input gradient norm $\norm{\nabla_{x_{1:T}}\ell_i}_2$. The gradient diagnostics are computed per example on a held-out set because the local mechanism proposition in Appendix~\ref{app:mechanism} is per-sample.

To tie the theory to practice more directly, we also recommend two descriptive diagnostics on theorem-aligned runs: (v) the observed contraction factor $\hat\gamma = \alpha + \hat M_\theta B_1$, and (vi) a local secant smoothness estimate,
\begin{equation}
\hat \beta_{\mathrm{sec}}(w) \defeq \max_{1\le j\le m}
\frac{\norm{\nabla \tilde L_S(w+\delta_j)-\nabla \tilde L_S(w)}_2}{\norm{\delta_j}_2},
\label{eq:beta_sec}
\end{equation}
for random perturbations $\delta_j$ of small norm around selected checkpoints. These quantities do not validate the theorems, but they do reveal whether the sufficient conditions and explicit constants are wildly disconnected from the trained models.

For the appendix mechanism proposition, we additionally report the smallest singular value of the logit-space Gram matrix $J_w J_w^\top$ on held-out examples. Since $J_w\in\R^{C\times p}$ with $p\gg C$ in most modern models, this tests \emph{row-space conditioning} rather than square invertibility.

\subsection{Additional controls required for final practical assessment}
\label{subsec:required_controls}
The current results establish that SAST can substantially reduce the surrogate to hard transfer gap on the two reported datasets. Stronger claims about practical superiority, however, require the controls in Table~\ref{tab:controls}. We list them explicitly so that they are part of the paper's evaluation protocol rather than implicit future work.

\begin{table*}[t]
\caption{Additional controls and extra experiments required for a final assessment of SAST.}
\label{tab:controls}
\centering
\footnotesize
\begin{tabularx}{\textwidth}{>{\raggedright\arraybackslash}p{0.18\textwidth} >{\raggedright\arraybackslash}X >{\raggedright\arraybackslash}X}
\toprule
Control experiment & Protocol & Why it matters \\
\midrule
Compute matched baseline & Match the total number of forward and backward passes used by SAST, either by doubling baseline updates or by extending baseline training until pass counts are comparable. Report surrogate forward accuracy, hard spike accuracy, transfer gap, robustness summary metrics, and wall clock. & Separates the effect of sharpness aware updates from the effect of extra optimization compute. \\

Reused second minibatch & Replace the independent second minibatch by $B'=B$ while keeping everything else fixed. & Tests the practical cost of violating the independence assumption used in the convergence theorem. \\

ASAM baseline & Replace SAM by ASAM with matched optimizer settings and matched compute. & Checks whether scale adaptive perturbations outperform or match vanilla SAM in this setting. \\

Post training threshold calibration & With network weights fixed, tune either a single global threshold multiplier $\lambda$ or per-layer multipliers $\lambda_\ell$ on a validation set to maximize hard spike accuracy. Report both the best calibration-only result and the combined result when calibration is applied to SAST trained models. & Tests whether a cheaper calibration heuristic can close the transfer gap without two-pass training. \\

Surrogate choice and slope & Compare arctan and fast-sigmoid surrogates over a small slope grid $k$. For each run, report $k$, $B_1$, $\hat M_\theta$, and $\hat\gamma$. & Determines whether the gains are specific to one surrogate family or one operating regime, and shows whether theorem aligned runs actually satisfy the contraction diagnostic. \\

Theory misaligned extension & Repeat the strongest setting with MaxPool and BatchNorm. Clearly label these as practical extensions not covered by the smoothness theory. & Separates theorem aligned evidence from broader engineering usefulness. \\

Seed sensitivity analysis & Plot per-seed transfer gaps, hard spike accuracies, and threshold statistics. Report medians in addition to means when variance is large. & Clarifies whether the gain comes from a consistent shift in all runs or from eliminating a subset of pathological seeds. \\
\bottomrule
\end{tabularx}
\end{table*}

\subsection{Training cost}
Because SAM requires two gradient evaluations, we report per-epoch wall-clock time, peak memory, and relative cost factors for baseline and SAST. When SAST overhead is partly offset by earlier stopping or fewer total epochs, that effect should be reported quantitatively rather than informally.

\begin{table}[t]
\caption{Training overhead measured on the hardware used for the experiments.}
\label{tab:overhead}
\centering
\footnotesize
\setlength{\tabcolsep}{2.8pt}
\resizebox{\columnwidth}{!}{%
\begin{tabular}{llcccc}
\toprule
Dataset & Method & Time / epoch & Time factor & Peak memory & Mem. factor \\
\midrule
N-MNIST & Baseline surrogate & $80.48 \pm 0.25$ s & $1.00\times$ & $0.40 \pm 0.01$ GB & $1.00\times$ \\
N-MNIST & SAST ($\rho=k, k>0$) & $167.72 \pm 1.81$ s & $2.08\times$ & $0.40 \pm 0.01$ GB & $1.00\times$ \\
DVS Gesture & Baseline surrogate & $2.87 \pm 0.07$ s & $1.00\times$ & $2.48 \pm 0.01$ GB & $1.00\times$ \\
DVS Gesture & SAST ($\rho=k, k>0$) & $5.09 \pm 0.09$ s & $1.77\times$ & $2.48 \pm 0.01$ GB & $1.00\times$ \\
\bottomrule
\end{tabular}%
}
\end{table}

\section{Discussion}
The main message is intentionally narrow. Unconditionally, SAST optimizes a sharpness aware surrogate objective in parameter space and inherits standard smooth nonconvex guarantees under the surrogate forward model. Robustness to bounded input corruption follows from an explicit input Lipschitz constant, although the constant is conservative. The stronger claim that flatter parameter-space solutions also reduce input sensitivity is justified only locally through the per sample Jacobian conditioning proposition in Appendix~\ref{app:mechanism}; we therefore present it as a mechanism result rather than a universal theorem.

The empirical evidence is strongest on surrogate-to-hard transfer. On both datasets, SAST narrows the transfer gap substantially while leaving surrogate forward accuracy similar. That is a meaningful result for deployment scenarios in which the final model must use hard spikes. At the same time, the current evidence does \emph{not} yet show that SAST is the most efficient way to obtain that benefit: compute-matched baselines, calibration baselines, and ASAM comparisons remain necessary for a full practical verdict.

The mixed behavior of input gradient norms should also be interpreted carefully. Those diagnostics matter for the local mechanism discussion, but the central empirical claim of this paper is narrower: SAST reduces sharpness proxies and improves hard spike transfer in the reported regimes. If some radii fail to reduce input gradients consistently, that weakens the mechanism narrative, not the core transfer-gap observation.

\section{Limitations}
The present theory is intentionally restrictive. The smoothness analysis assumes surrogate forward training, average pooling or other smooth linear downsampling, and an inference-mode affine normalization model. If the implementation uses hard spikes in the forward pass, MaxPool, or training mode BatchNorm, the analysis no longer applies directly. The explicit constants are also conservative and may be numerically loose for realistic network sizes and sequence lengths.

The empirical scope remains limited in three important ways. First, the current benchmarks are only N-MNIST and DVS Gesture; broader claims would benefit from CIFAR10-DVS or another harder event-based benchmark. Second, the present results do not yet include the compute-matched, threshold-calibration, and ASAM controls needed to decide whether SAST is preferable to cheaper alternatives. Third, the DVS Gesture hard spike accuracy remains modest in absolute terms, so the method should be viewed as a promising transfer gap reduction technique rather than a complete solution to hard spike deployment.

\section{Conclusion}
We presented SAST, a sharpness aware optimization method for surrogate forward training of SNNs. By training an auxiliary smooth surrogate forward network, the method turns SNN optimization into a smooth empirical risk problem and supports explicit state stability, input-Lipschitz, smoothness, first order SAM, and stochastic convergence guarantees. Empirically, the clearest current effect is improved surrogate to hard transfer: on the reported benchmarks, SAST preserves surrogate forward accuracy while substantially improving hard spike evaluation. The next step is to determine, through compute-matched and calibration-aware comparisons, whether those gains remain compelling relative to cheaper baselines.

\appendices
\section{Additional Theory Details and Proof Sketches}
\label{app:theory}

\subsection{Explicit constants for the smoothness theorem}
\label{app:explicit_constants}
Let $a_t^{(\ell)} \defeq u_t^{(\ell)}-\theta^{(\ell)}$, $D_t^{(\ell)} \defeq \diag(\sigma'(a_t^{(\ell)}))$, and $E_t^{(\ell)} \defeq \diag(\sigma''(a_t^{(\ell)}))$. By Definition~\ref{def:admissible},
\begin{equation}
\norm{D_t^{(\ell)}}_2 \le B_1,
\qquad
\norm{E_t^{(\ell)}}_2 \le B_2.
\end{equation}

\begin{lemma}[One-step Jacobian bounds]
\label{lem:local_jac}
Under the surrogate forward dynamics,
\begin{align}
\norm{\frac{\partial u_t^{(\ell)}}{\partial u_{t-1}^{(\ell)}}}_2 &\le \alpha + M_\theta B_1, \label{eq:juu}\\
\norm{\frac{\partial \tilde z_t^{(\ell)}}{\partial u_t^{(\ell)}}}_2 &\le B_1, \label{eq:jzu}\\
\norm{\frac{\partial u_t^{(\ell)}}{\partial z_t^{(\ell-1)}}}_2 &\le M_A. \label{eq:juz}
\end{align}
\end{lemma}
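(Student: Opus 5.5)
The plan is to differentiate the one-step surrogate update \eqref{eq:ssm_u}--\eqref{eq:ssm_z} directly and bound each resulting matrix by the spectral-norm estimates already collected in the paper. I would treat $u_{t-1}^{(\ell)}$ and $\tilde z_t^{(\ell-1)}$ as the independent inputs of the step $t-1\to t$ in layer $\ell$. Because of the reset term, the previous surrogate spike $\tilde z_{t-1}^{(\ell)}=\sigma(u_{t-1}^{(\ell)}-\theta^{(\ell)})$ is a function of $u_{t-1}^{(\ell)}$, so it is not an extra free input. All three bounds then follow from the chain rule together with $\norm{D_t^{(\ell)}}_2\le B_1$ and Assumption~\ref{ass:weights}.

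For \eqref{eq:jzu}, since $\tilde z_t^{(\ell)}=\sigma(u_t^{(\ell)}-\theta^{(\ell)})$ acts coordinatewise, its Jacobian with respect to $u_t^{(\ell)}$ is the diagonal matrix $D_t^{(\ell)}$. Its spectral norm is the largest $|\sigma'|$ entry, so it is at most $B_1$ by Definition~\ref{def:admissible}.

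For \eqref{eq:juz}, the map $\tilde z_t^{(\ell-1)}\mapsto u_t^{(\ell)}$ is affine with linear part $A^{(\ell)}$, because $u_{t-1}^{(\ell)}$ and $\tilde z_{t-1}^{(\ell)}$ are held fixed in the one-step partial derivative. The Jacobian is therefore exactly $A^{(\ell)}$, and $\norm{A^{(\ell)}}_2\le M_A$ by Assumption~\ref{ass:weights}.

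For \eqref{eq:juu}, the chain rule through the reset term gives
\begin{equation*}
\frac{\partial u_t^{(\ell)}}{\partial u_{t-1}^{(\ell)}} = \alpha I - \Theta^{(\ell)} D_{t-1}^{(\ell)} .
\end{equation*}
The triangle inequality and submultiplicativity then bound this by $\alpha+\norm{\Theta^{(\ell)}}_2\norm{D_{t-1}^{(\ell)}}_2\le \alpha+M_\theta B_1$, using $\norm{\diag(\theta^{(\ell)})}_2=\norm{\theta^{(\ell)}}_\infty\le M_\theta$. An alternative is to note that the matrix is diagonal with entries $\alpha-\theta_j\sigma'(a_{t-1,j})$, each of absolute value at most $\alpha+M_\theta B_1$; this gives the same bound without needing a sign on $\theta$.

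The only point needing care is this third bound. One must fix the convention that the reset spike is differentiated through $u_{t-1}^{(\ell)}$, and that the input $\tilde z_t^{(\ell-1)}$ is held fixed as an exogenous argument of the step map. I would state this convention explicitly so that the bound $\gamma$ is what later gets unrolled into $S_T(\gamma)$. At the boundary $t=1$ the reset term is constant, since $s_0^{(\ell)}=0$, so the Jacobian there is just $\alpha I$, which is trivially within the bound.
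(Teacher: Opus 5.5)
Your proposal is correct and matches the paper's argument. The appendix states this lemma without a separate proof, but the bounds rest on exactly what you use: $\partial u_t^{(\ell)}/\partial u_{t-1}^{(\ell)}=\alpha I-\Theta^{(\ell)}D_{t-1}^{(\ell)}$, $\norm{D_t^{(\ell)}}_2\le B_1$, $\norm{\diag(\theta^{(\ell)})}_2=\norm{\theta^{(\ell)}}_\infty\le M_\theta$, and $\norm{A^{(\ell)}}_2\le M_A$. Your explicit partial-derivative convention and your treatment of the $t=1$ boundary (where the initial condition $\tilde s_0^{(\ell)}=0$ is not $\sigma(u_0^{(\ell)}-\theta^{(\ell)})$) are useful clarifications that the paper leaves unstated.
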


\begin{lemma}[Conservative parameter-Jacobian and Hessian constants]
\label{lem:Jw_Hw_appendix}
Under Assumptions~\ref{ass:inputs}--\ref{ass:param}, Assumption~\ref{ass:gamma}, and Definition~\ref{def:admissible}, one conservative choice in \eqref{eq:LwHw_main} is
\begin{equation}
L_w \defeq M_{\text{out}} \cdot \frac{\left(B_1 M_A S_T(\gamma)\right)^L}{\sqrt{T}} \cdot C_p,
\label{eq:Lw_appendix}
\end{equation}
where
\begin{equation}
C_p \defeq \left(\sum_{\ell=1}^L \left(R_z^{(\ell-1)} + 2 + M_\theta B_1\right) + \sqrt{d_L} + 1\right),
\label{eq:Cp_appendix}
\end{equation}
and
\begin{equation}
C_{\text{inner}} \defeq \sum_{\ell=1}^L \left(R_z^{(\ell-1)} + 2 + M_\theta B_1\right).
\label{eq:Cinner_appendix}
\end{equation}
A conservative componentwise Hessian constant is obtained by defining
$G_T \defeq B_1 M_A S_T(\gamma)$ and writing
\begin{equation}
\begin{aligned}
H_w
&\defeq M_{\text{out}}\,G_T^{2L}\,B_2\,C_p^2
\\
&\quad + G_T^{L}\,\frac{C_{\text{inner}}}{\sqrt{T}}.
\end{aligned}
\label{eq:Hw_appendix}
\end{equation}
That is,
\begin{equation}
\sup_{x_{1:T}} \max_{1\le c\le C}\norm{\nabla_w^2 \tilde f_{w,c}(x_{1:T})}_2 \le H_w.
\end{equation}
The constants are intentionally conservative; their main purpose is to expose how depth $L$, temporal gain $S_T(\gamma)$, and surrogate slope $(B_1,B_2)$ enter the bound.
\end{lemma}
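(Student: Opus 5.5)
The plan is to run forward-mode sensitivity propagation through the unrolled surrogate network. The Jacobian is handled first, then the Hessian is obtained by differentiating the same recursion a second time.

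\textbf{Step 1 (first-order recursion).} Fix an input and a unit direction $v\in\R^p$, split into blocks $(\delta A^{(\ell)},\delta b^{(\ell)},\delta\theta^{(\ell)},\delta W_{\text{out}},\delta b_{\text{out}})$. By Assumption~\ref{ass:param}, block Frobenius norms are at most $1$ and upper bound spectral norms. Differentiating \eqref{eq:ssm_u}--\eqref{eq:ssm_z} gives the tangent recursion
\begin{equation*}
\dot u_t^{(\ell)}=\alpha\dot u_{t-1}^{(\ell)}-\Theta^{(\ell)}D_{t-1}^{(\ell)}\dot u_{t-1}^{(\ell)}+A^{(\ell)}\dot z_t^{(\ell-1)}+r_t^{(\ell)},\qquad \dot z_t^{(\ell)}=D_t^{(\ell)}\dot u_t^{(\ell)},
\end{equation*}
with source term $r_t^{(\ell)}=\delta A^{(\ell)}\tilde z_t^{(\ell-1)}+\delta b^{(\ell)}-\diag(\delta\theta^{(\ell)})\tilde z_{t-1}^{(\ell)}+\Theta^{(\ell)}D_{t-1}^{(\ell)}\diag(\delta\theta^{(\ell)})\mathbf 1$. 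The last term comes from the $-\theta$ inside $\sigma$.

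Using $\norm{\tilde z}_2\le R_z$ from Proposition~\ref{prop:state_and_input} and $\sigma\in[0,1]$, we get $\norm{r_t^{(\ell)}}_2\le \norm{\delta A}R_z^{(\ell-1)}+\norm{\delta b}+\norm{\delta\theta}(1+M_\theta B_1)$. This is bounded by $R_z^{(\ell-1)}+2+M_\theta B_1$, which is the layer summand in $C_p$. (A $\sqrt{d_\ell}$ factor from $\tilde z$ should be absorbed here, as the paper's conservative constants appear to do; I would check this bookkeeping.)

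\textbf{Step 2 (unrolling in time and depth).} By Lemma~\ref{lem:local_jac} the homogeneous part contracts with factor $\gamma$, so summing over time multiplies source norms by at most $S_T(\gamma)$. Passing to the next layer multiplies by $B_1M_A$ via $D$ and $A$. A perturbation injected at layer $\ell$ therefore reaches the last layer amplified by at most $(B_1M_AS_T(\gamma))^{L-\ell+1}$. For a uniform bound, assuming $G_T\ge1$ (otherwise replace $G_T$ by $\max(1,G_T)$), this is at most $G_T^L$. Summing over layers gives $C_{\text{inner}}$ times this amplification.

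The time average with $\norm{\cdot}_{2,2}$ bookkeeping (Cauchy--Schwarz over $t$) yields the $1/\sqrt T$ factor, exactly as in the derivation of $L_x$. Multiplying by $\norm{W_{\text{out}}}\le M_{\text{out}}$, and adding the readout contributions $\norm{\delta W_{\text{out}}}\norm{\bar z}\le\sqrt{d_L}$ and $\norm{\delta b_{\text{out}}}\le 1$, gives \eqref{eq:Lw_appendix}.

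\textbf{Step 3 (Hessian).} Differentiate the tangent recursion once more along a second direction $v'$. The second-order state $\ddot u$ obeys the same linear recursion, so it inherits the same gain $G_T^L$, and only the source changes. The new source contains three kinds of terms:
\begin{itemize}
\item curvature terms $E_t\,\dot u\odot\dot u'$, of size at most $B_2(G_T^LC_p)^2$;
\item cross terms in which $\delta A$ or $\delta\theta$ multiplies a first-order tangent;
\item the readout cross term $\delta W_{\text{out}}\dot{\bar z}'$, which has no $M_{\text{out}}$ factor and gives $G_T^LC_{\text{inner}}/\sqrt T$.
\end{itemize}
Bounding each with Step~2 and propagating to the logits gives the two summands of \eqref{eq:Hw_appendix}.

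\textbf{Main obstacle.} The hard part is showing that the cross terms in the hidden layers are dominated by the conservative form stated. In particular, the $\Theta D\,\delta\theta$ terms and the mixed $\delta A\,\dot z$ terms must be absorbed into $M_{\text{out}}G_T^{2L}B_2C_p^2$ or the second summand. My first approach would be to bound each mixed term by $G_T^{2L}C_p^2$ times $\max(1,B_2)$-type factors, treating constants generously. If that fails to close, I would state $H_w$ with an enlarged constant, since the theorem only needs some explicit finite choice.
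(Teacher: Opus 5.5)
There is a genuine gap in Step~2, where you get $1/\sqrt{T}$ ``exactly as in the derivation of $L_x$''. In the $L_x$ argument the $\sqrt{T}$ is absorbed because the input perturbation is itself measured in $\norm{\cdot}_{2,2}$. A unit parameter direction $v$ instead injects the source $r_t^{(\ell)}$ at \emph{every} time step (e.g.\ $r_t^{(\ell)}=\delta b^{(\ell)}$ for all $t$). Hence $\norm{r^{(\ell)}_{1:T}}_{2,2}$ is of order $\sqrt{T}$ times the per-step bound, and Cauchy--Schwarz on the time average only cancels that factor. The right bookkeeping is in $\max_t$: $\max_t\norm{\dot u_t^{(\ell)}}_2\le S_T(\gamma)\max_t\norm{r_t^{(\ell)}+A^{(\ell)}\dot z_t^{(\ell-1)}}_2$ and $\norm{\dot{\bar z}^{(L)}}_2\le\max_t\norm{\dot z_t^{(L)}}_2$, with no decay in $T$. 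The last sentence of Step~2 is a second gap: $M_{\mathrm{out}}G_T^LC_{\mathrm{inner}}/\sqrt{T}+\sqrt{d_L}+1$ is dominated by \eqref{eq:Lw_appendix} only if $M_{\mathrm{out}}G_T^L/\sqrt{T}\ge 1$. Neither gap can be closed, because the displayed constant is false. The $b_{\mathrm{out}}$ columns of $J_w\tilde f_w$ form $I_C$, so $\norm{J_w\tilde f_w}_2\ge 1$ for every $T$. But \eqref{eq:Lw_appendix} is at most $M_{\mathrm{out}}(B_1M_A/(1-\gamma))^LC_p/\sqrt{T}\to 0$, and it is below $1$ at any fixed $T$ once $M_{\mathrm{out}}$ is small. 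The paper gives no proof of this lemma (the smoothness sketch only cites it), so you are not missing an argument from there.

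The Hessian part fails the same way, and the obstacle you flag is real, not technical. Take $L=d_0=d_1=1$, $\theta=0$ (so $M_\theta=0$, $\gamma=\alpha$), $b=0$, $x_t=0$, and the arctan surrogate with slope $k$. Then $u_t\equiv 0$ and $\partial u_t/\partial b=S_t(\alpha)$, so $\partial^2\tilde f_{w,c}/\partial(W_{\mathrm{out}})_{c1}\partial b=B_1\frac{1}{T}\sum_{t=1}^T S_t(\alpha)\to B_1/(1-\alpha)$. Meanwhile \eqref{eq:Hw_appendix} is $O(M_{\mathrm{out}}k^4)+O(k/\sqrt{T})$. This mixed derivative is exactly your third bullet, whose claimed size again rests on the spurious $1/\sqrt{T}$. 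More generally, the $B_2$-free cross terms ($\delta W_{\mathrm{out}}\dot{\bar z}'$, $\delta A^{(\ell)}\dot z'^{(\ell-1)}$, $\diag(\delta\theta^{(\ell)})D^{(\ell)}_{t-1}\dot u'^{(\ell)}_{t-1}$) carry roughly $L$ powers of the gain and no $B_2$. They therefore cannot be absorbed into $M_{\mathrm{out}}G_T^{2L}B_2C_p^2$.

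What your method does prove is a corrected lemma, for example
$L_w=M_{\mathrm{out}}\sum_{\ell=1}^L B_1\big(S_T(\gamma)(R_z^{(\ell-1)}+2+M_\theta B_1)+1\big)G_T^{L-\ell}+\sqrt{d_L}+1$,
together with an $H_w$ that lists the cross terms explicitly. Three repairs are needed along the way:
\begin{itemize}
\item The output tangent is $\dot z_t^{(\ell)}=D_t^{(\ell)}(\dot u_t^{(\ell)}-\delta\theta^{(\ell)})$, so you dropped a direct $B_1\norm{\delta\theta^{(\ell)}}_2$ term; this is the ``$+1$'' above.
\item An injection at layer $\ell$ is amplified by $B_1S_T(\gamma)G_T^{L-\ell}$. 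This is dominated by $G_T^{L-\ell+1}$ only when $M_A\ge 1$.
\item Your $\max(1,G_T)$ device changes the statement rather than proving it.
\end{itemize}
Your $\sqrt{d_\ell}$ worry is unfounded, since $\norm{\diag(\delta\theta)\tilde z}_2\le\norm{\delta\theta}_2\max_i|\tilde z_i|\le\norm{\delta\theta}_2$.
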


\subsection{Local mechanism proposition}
\label{app:mechanism}
The previous results bound robustness through explicit Lipschitz constants. To connect SAM more directly to input sensitivity, we introduce a local conditioning assumption on the parameter-to-logit Jacobian.

\begin{assumption}[Local Jacobian conditioning]
\label{ass:cond}
Fix a data point $(x_{1:T},y)$. Let $J_w(x_{1:T})\defeq J_w \tilde f_w(x_{1:T})\in\R^{C\times p}$. Assume there exists $\mu>0$ such that
\begin{equation}
\sigma_{\min}\big(J_w(x_{1:T})\big) \ge \mu.
\label{eq:mu}
\end{equation}
\end{assumption}

\begin{remark}[Interpretation of Assumption~\ref{ass:cond}]
Since $J_w\in\R^{C\times p}$ with $p$ typically much larger than $C$, Assumption~\ref{ass:cond} requires \emph{full row rank} and a nondegenerate local lower singular-value bound on the logit row space; it does not require square invertibility. We do not assume such a bound globally. It is used only as a local explanatory condition and should be paired with empirical conditioning diagnostics if invoked in interpretation.
\end{remark}

\begin{proposition}[Local link between per-sample parameter and input gradients]
\label{prop:param_to_input_grad}
Let $v(x_{1:T},y)\defeq \nabla_o \ell(o,y)\vert_{o=\tilde f_w(x_{1:T})}\in\R^C$. Then
\begin{equation}
\norm{\nabla_{x_{1:T}} \ell(\tilde f_w(x_{1:T}),y)}_2 \le \norm{J_{x_{1:T}}(x_{1:T})}_2\,\norm{v(x_{1:T},y)}_2,
\end{equation}
and
\begin{equation}
\norm{\nabla_w \ell(\tilde f_w(x_{1:T}),y)}_2 \ge \sigma_{\min}(J_w(x_{1:T}))\,\norm{v(x_{1:T},y)}_2.
\end{equation}
If Assumption~\ref{ass:cond} holds, let
$\ell_w \defeq \ell(\tilde f_w(x_{1:T}),y)$. Then
\begin{equation}
\norm{\nabla_{x_{1:T}} \ell_w}_2
\le
\frac{\norm{J_{x_{1:T}}(x_{1:T})}_2}{\mu}\,\norm{\nabla_w \ell_w}_2.
\label{eq:grad_link}
\end{equation}
Consequently, any training procedure that reduces \emph{per-sample} parameter-gradient norms while leaving the local conditioning ratio comparable yields a matching upper bound reduction on per-sample input gradient norms at that point. This is a local explanatory statement, not a global theorem about SAM.
\end{proposition}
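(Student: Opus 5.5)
The whole proposition follows from the chain rule applied at the logit layer, together with the variational characterization of extreme singular values. The loss depends on both $w$ and $x_{1:T}$ only through the logits $o=\tilde f_w(x_{1:T})$. So I will first write the two gradients as transposed Jacobians acting on the common vector $v=v(x_{1:T},y)$:
\begin{equation}
\nabla_{x_{1:T}} \ell(\tilde f_w(x_{1:T}),y) = J_{x_{1:T}}(x_{1:T})^\top v,
\qquad
\nabla_{w} \ell(\tilde f_w(x_{1:T}),y) = J_{w}(x_{1:T})^\top v .
\end{equation}
Here $J_{x_{1:T}}$ is the $C\times (Td_0)$ Jacobian of the surrogate logits with respect to the stacked input sequence, and $\norm{\cdot}_2$ on the input side is the $\norm{\cdot}_{2,2}$ norm. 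Differentiability holds because $\sigma\in C^2$ and the recursion \eqref{eq:ssm_u}--\eqref{eq:ssm_z} is a finite composition of smooth maps. Proposition~\ref{prop:state_and_input} guarantees that $J_{x_{1:T}}$ is finite, indeed bounded by $L_x$, though I will not need that bound.

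The first inequality is then immediate from the definition of the operator norm: $\norm{J_{x}^\top v}_2\le\norm{J_x^\top}_2\norm{v}_2=\norm{J_x}_2\norm{v}_2$.

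For the second inequality I will use the fact that $J_w\in\R^{C\times p}$ has at most $C$ nonzero singular values. I interpret $\sigma_{\min}(J_w)$ as the $C$-th singular value, consistent with the row-space reading in the remark after Assumption~\ref{ass:cond}. For any $v\in\R^C$,
\begin{equation}
\norm{J_w^\top v}_2^2 = v^\top J_w J_w^\top v \ge \lambda_{\min}(J_w J_w^\top)\norm{v}_2^2 = \sigma_{\min}(J_w)^2\norm{v}_2^2,
\end{equation}
by the Rayleigh quotient applied to the $C\times C$ positive semidefinite Gram matrix. This bound holds trivially if $\sigma_{\min}=0$, so no assumption is needed for it.

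Under Assumption~\ref{ass:cond} we have $\norm{v}_2\le \norm{\nabla_w\ell_w}_2/\mu$. Substituting this into the first inequality yields \eqref{eq:grad_link}. The closing "consequently" sentence then restates \eqref{eq:grad_link}: the right-hand side scales linearly in $\norm{\nabla_w\ell_w}_2$ when the ratio $\norm{J_x}_2/\mu$ is held fixed.

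The only step requiring care is the convention for $\sigma_{\min}$ of a wide matrix. If one took the smallest of all $\min(C,p)$ singular values without noticing that $p\gg C$, or confused it with $\sigma_{\min}(J_w^\top)$ acting on $\R^p$, the lower bound would be misstated. I will therefore state explicitly that $\sigma_{\min}(J_w)=\sqrt{\lambda_{\min}(J_wJ_w^\top)}$. This matches the Gram-matrix diagnostic described in the experiments, and under it the Rayleigh-quotient step is exact.
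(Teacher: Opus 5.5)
Your proposal is correct and follows essentially the same route as the paper's proof sketch. Both use the chain rule through the logits to write both gradients as $J_{x_{1:T}}^\top v$ and $J_w^\top v$, bound the first by the operator norm and the second by $\sigma_{\min}(J_w)\norm{v}_2$, and then substitute under Assumption~\ref{ass:cond}; your Rayleigh-quotient step on the $C\times C$ Gram matrix $J_wJ_w^\top$ just makes explicit the one-line singular-value bound the paper asserts.

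One small quibble concerns your closing caveat, which is muddled but does not affect the argument. For $p\ge C$, the smallest of the $\min(C,p)$ singular values is already $\sqrt{\lambda_{\min}(J_wJ_w^\top)}$, and $J_w^\top$ has the same singular values as $J_w$. The genuine pitfall is $\sqrt{\lambda_{\min}(J_w^\top J_w)}$, which vanishes whenever $p>C$.
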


\subsection{Proof sketches}
\begin{proof}[Proof sketch for Proposition~\ref{prop:state_and_input}]
For $\ell=1$, $z_t^{(0)}=x_t$ and Assumption~\ref{ass:inputs} gives $\norm{z_t^{(0)}}_2\le R_x$. For $\ell\ge 2$, $\tilde z_t^{(\ell-1)}\in[0,1]^{d_{\ell-1}}$, hence $\norm{\tilde z_t^{(\ell-1)}}_2\le \sqrt{d_{\ell-1}}$. Likewise, $\tilde s_{t-1}^{(\ell)}\in[0,1]^{d_\ell}$ implies $\norm{\tilde s_{t-1}^{(\ell)}}_2\le \sqrt{d_\ell}$. From the membrane recursion,
\begin{equation*}
\begin{aligned}
\norm{u_t^{(\ell)}}_2
&\le
\alpha \norm{u_{t-1}^{(\ell)}}_2 + M_A \norm{\tilde z_t^{(\ell-1)}}_2 \\
&\quad + \norm{b^{(\ell)}}_2 + \norm{\diag(\theta^{(\ell)})}_2\,\norm{\tilde s_{t-1}^{(\ell)}}_2.
\end{aligned}
\end{equation*}
Since $\norm{\diag(\theta^{(\ell)})}_2=\norm{\theta^{(\ell)}}_\infty\le M_\theta$, unrolling the scalar recursion yields \eqref{eq:Ru_main}.

For the input-Lipschitz bound, define $\|v_{1:T}\|_{2,2}\defeq (\sum_{t=1}^T \|v_t\|_2^2)^{1/2}$. Let $\Delta u_t^{(\ell)}$ and $\Delta z_t^{(\ell-1)}$ denote the differences induced by two input sequences. Lemma~\ref{lem:local_jac} gives the one-step recursion
\begin{equation*}
\|\Delta u_t^{(\ell)}\|_2 \le \gamma\,\|\Delta u_{t-1}^{(\ell)}\|_2 + M_A\,\|\Delta z_t^{(\ell-1)}\|_2,
\qquad \gamma=\alpha+M_\theta B_1.
\end{equation*}
Unrolling over time and applying Young's convolution inequality yields
\begin{equation*}
\|\Delta u_{1:T}^{(\ell)}\|_{2,2}
\le M_A S_T(\gamma)\,\|\Delta z_{1:T}^{(\ell-1)}\|_{2,2}.
\end{equation*}
Since $\sigma$ is $B_1$-Lipschitz,
\begin{equation*}
\|\Delta \tilde z_{1:T}^{(\ell)}\|_{2,2}
\le B_1 M_A S_T(\gamma)\,\|\Delta z_{1:T}^{(\ell-1)}\|_{2,2}.
\end{equation*}
Applying this bound across $L$ layers gives
\begin{equation*}
\|\Delta \tilde z_{1:T}^{(L)}\|_{2,2}
\le \left(B_1 M_A S_T(\gamma)\right)^L \|\Delta x_{1:T}\|_{2,2}.
\end{equation*}
Finally, the time-average readout contributes a factor $1/\sqrt{T}$ by Cauchy--Schwarz, and the linear readout contributes $M_{\text{out}}$, yielding \eqref{eq:Lx_main}.
\end{proof}

\begin{proof}[Proof sketch for Theorem~\ref{thm:smooth_main}]
For one sample,
\begin{equation*}
\nabla_w \ell(\tilde f_w(x_{1:T}),y) = (J_w\tilde f_w(x_{1:T}))^\top \nabla_o \ell(o,y)\vert_{o=\tilde f_w(x_{1:T})}.
\end{equation*}
Differentiating again gives
\begin{equation*}
\nabla_w^2 \ell
= (J_w\tilde f)^\top \nabla_o^2\ell\,(J_w\tilde f)
+ \sum_{c=1}^C \frac{\partial \ell}{\partial o_c}\,\nabla_w^2 \tilde f_c.
\end{equation*}
Using \eqref{eq:ce_bounds}, \eqref{eq:LwHw_main}, and $\|\nabla_o \ell\|_1\le 2$ gives
\begin{equation*}
\|\nabla_w^2 \ell\|_2 \le \frac{1}{2}L_w^2 + 2H_w.
\end{equation*}
Averaging over samples preserves the same upper bound, so $\tilde L_S$ is $\beta$-smooth with $\beta$ given by \eqref{eq:beta_main}. The explicit conservative formulas follow from Lemma~\ref{lem:Jw_Hw_appendix}.
\end{proof}

\begin{proof}[Proof sketch for Corollary~\ref{cor:sam_first_order}]
By smoothness,
\begin{equation*}
\tilde L_S(w+\epsilon)
\le
\tilde L_S(w) + \ip{\nabla \tilde L_S(w)}{\epsilon} + \frac{\beta}{2}\norm{\epsilon}_2^2.
\end{equation*}
Maximizing the right-hand side over $\norm{\epsilon}_2\le \rho$ yields \eqref{eq:sam_upper_main}.
\end{proof}

\begin{proof}[Proof sketch for Theorem~\ref{thm:sast_conv}]
Let $L(w)\defeq \tilde L_S(w)$ and define
\begin{equation*}
a_k \defeq \nabla L(w_k),
\qquad
b_k \defeq \nabla L(w_k+\epsilon_k).
\end{equation*}
Because $B_k'$ is independent of $B_k$, the second-pass gradient is conditionally unbiased:
\begin{equation*}
\E[g_{B_k'}(w_k+\epsilon_k)\mid w_k,\epsilon_k] = b_k.
\end{equation*}
By $\beta$-smoothness of $L$,
\begin{equation*}
\begin{aligned}
L(w_{k+1})
&\le L(w_k) - \eta\ip{a_k}{g_{B_k'}(w_k+\epsilon_k)}
\\
&\quad + \frac{\beta\eta^2}{2}\norm{g_{B_k'}(w_k+\epsilon_k)}_2^2.
\end{aligned}
\end{equation*}
Taking conditional expectation and using the variance bound gives
\begin{equation}
\E[L(w_{k+1})\mid w_k,\epsilon_k]
\le
L(w_k) - \eta\ip{a_k}{b_k} + \frac{\beta\eta^2}{2}\left(\norm{b_k}_2^2+\sigma^2\right).
\label{eq:conv_pf_1}
\end{equation}
Now $\norm{b_k-a_k}_2\le \beta\norm{\epsilon_k}_2\le \beta\rho$, so
\begin{equation}
\ip{a_k}{b_k}
= \norm{a_k}_2^2 + \ip{a_k}{b_k-a_k}
\ge \frac{1}{2}\norm{a_k}_2^2 - \frac{1}{2}\beta^2\rho^2,
\label{eq:conv_pf_2}
\end{equation}
and
\begin{equation}
\norm{b_k}_2^2
\le 2\norm{a_k}_2^2 + 2\beta^2\rho^2.
\label{eq:conv_pf_3}
\end{equation}
Substituting \eqref{eq:conv_pf_2} and \eqref{eq:conv_pf_3} into \eqref{eq:conv_pf_1} yields
\begin{equation*}
\begin{aligned}
\E[L(w_{k+1})\mid w_k,\epsilon_k]
&\le
L(w_k) - \left(\frac{\eta}{2}-\beta\eta^2\right)\norm{a_k}_2^2 \\
&\quad + \left(\frac{\eta}{2}+\beta\eta^2\right)\beta^2\rho^2
 + \frac{\beta\eta^2}{2}\sigma^2.
\end{aligned}
\end{equation*}
If $\eta\le 1/(4\beta)$, then $\eta/2-\beta\eta^2\ge \eta/4$ and $\eta/2+\beta\eta^2\le 3\eta/4$. Taking full expectation and summing for $k=0,\dots,K-1$ gives
\begin{equation*}
\frac{\eta}{4}\sum_{k=0}^{K-1}\E\norm{\nabla L(w_k)}_2^2
\le
L(w_0)-L^\star + \frac{3\eta K}{4}\beta^2\rho^2 + \frac{\beta\eta^2 K}{2}\sigma^2.
\end{equation*}
Divide by $\eta K/4$ to obtain \eqref{eq:conv_rate}.
\end{proof}

\begin{proof}[Proof sketch for Proposition~\ref{prop:input_rob_main}]
Cross-entropy is $\sqrt{2}$-Lipschitz in logit space by \eqref{eq:ce_bounds}. Combining that fact with Proposition~\ref{prop:state_and_input} yields \eqref{eq:loss_stab_main}. For independent event drop, Jensen's inequality gives
\begin{equation*}
\E\left[\norm{x_{1:T}-\tilde x_{1:T}}_{2,2}\right]
\le
\left(\E\sum_{t=1}^T \norm{x_t-\tilde x_t}_2^2\right)^{1/2}
\le \sqrt{pT}\,R_x,
\end{equation*}
which proves the final claim.
\end{proof}

\begin{proof}[Proof sketch for Proposition~\ref{prop:param_to_input_grad}]
By the chain rule,
\begin{equation*}
\begin{aligned}
\nabla_{x_{1:T}}\ell
&= J_{x_{1:T}}(x_{1:T})^\top v(x_{1:T},y),\\
\nabla_w\ell
&= J_w(x_{1:T})^\top v(x_{1:T},y).
\end{aligned}
\end{equation*}
Therefore
\begin{equation*}
\norm{\nabla_{x_{1:T}}\ell}_2 \le \norm{J_{x_{1:T}}(x_{1:T})}_2\,\norm{v(x_{1:T},y)}_2,
\end{equation*}
while
\begin{equation*}
\begin{aligned}
\norm{\nabla_w\ell}_2
&= \norm{J_w(x_{1:T})^\top v(x_{1:T},y)}_2
\\
&\ge \sigma_{\min}(J_w(x_{1:T}))\,\norm{v(x_{1:T},y)}_2.
\end{aligned}
\end{equation*}
Combining the two inequalities with Assumption~\ref{ass:cond} gives \eqref{eq:grad_link}.
\end{proof}

\section{Extended Experimental Details}
\label{app:expcheck}
This appendix makes the implementation protocol explicit so that Table~\ref{tab:main_results}, Figure~\ref{fig:robustness_placeholder}, and Table~\ref{tab:overhead} can be reproduced with the same evaluation conventions.

\paragraph{Architecture and preprocessing.}
All main results use surrogate-forward LIF dynamics from Definition~\ref{def:surrogate} and the readout in \eqref{eq:readout}. Event streams are converted to fixed-length frame sequences by temporal binning, with polarity preserved as separate channels and inputs normalized to $[0,1]$ as described in Section~\ref{sec:experiments}. For theorem-aligned runs, downsampling is linear (average pooling or strided convolution), and normalization is treated as an affine inference-mode map; MaxPool or training-mode BatchNorm runs are reported only as theory-misaligned practical extensions.

\paragraph{Optimization details.}
Baseline and SAST share the same optimizer, learning-rate schedule, weight decay, augmentation policy, epoch budget, and seed list. The only algorithmic difference is the SAM perturbation: SAST uses the radius grid reported in Table~\ref{tab:main_results}, $\rho\in\{0.02,0.05,0.10,0.20,0.30,0.40,0.50\}$, it should be noted that $\rho$ was tested $\{0.2 \leq \rho \leq 1\}$, with independent minibatches $B$ and $B'$ for the ascent and descent passes. Hidden states are reset at sequence boundaries and before each SAM pass so that perturbation effects are not confounded by stale state carryover.

\paragraph{hard spike evaluation protocol.}
All hard-spike numbers follow Definition~\ref{def:hard_eval} exactly: we keep trained weights and thresholds fixed, reset states per sequence, and replace only the surrogate spike nonlinearity by the hard threshold. Unless explicitly labeled as a calibration baseline, no threshold rescaling, clipping, post-hoc fitting, or test-time statistic adaptation is applied.

\paragraph{Corruption definitions.}
We evaluate robustness on a shared severity grid $p\in\{0,0.1,0.2,0.3,0.4\}$ for every method and evaluation mode. The first corruption family is independent event-drop implemented as Bernoulli masking on input voxels (pixel, time-bin, polarity). The second family is temporal corruption (time jitter or dropped time bins) with boundary-safe handling and polarity preserved. For both families, the same corrupted inputs are used when comparing baseline and SAST to remove sampling noise from method comparisons.

\paragraph{Contraction and smoothness diagnostics.}
For theorem-aligned checkpoints, we report the surrogate slope parameter $k$, the induced bound constant $B_1$, the empirical threshold bound $\hat M_\theta$, and the resulting contraction diagnostic $\hat\gamma=\alpha+\hat M_\theta B_1$. Local smoothness is quantified with the secant estimator in \eqref{eq:beta_sec} on a fixed held-out subset and a shared perturbation-radius grid; reported values include both the empirical secants and the conservative theoretical upper bound so readers can assess looseness directly.

\paragraph{Mechanism diagnostics.}
For Appendix~\ref{app:mechanism}, we compute per-example $\norm{\nabla_w \ell_i}_2$ and $\norm{\nabla_{x_{1:T}}\ell_i}_2$ on a held-out split without augmentation, using the same forward mode as the associated evaluation (surrogate or hard). We additionally estimate the smallest singular value proxy of $J_wJ_w^\top$ to assess local conditioning assumptions. Metrics are reported with mean$\pm$std and median/IQR to distinguish systematic shifts from heavy-tail behavior.

\paragraph{Compute-matched control.}
Because SAST performs two gradient evaluations per update, we define compute matching by total forward/backward pass count. Our primary control doubles baseline updates (or equivalently extends baseline epochs) at fixed batch size until pass counts match the corresponding SAST run; we then report surrogate accuracy, hard-spike accuracy, transfer gap, robustness summaries, and wall-clock cost under the same checkpoint-selection rule.

\paragraph{Threshold-calibration baseline.}
With weights frozen, we evaluate two post-hoc threshold families: a global multiplier $\lambda\in\{0.5,0.6,\dots,1.5\}$ and per-layer multipliers $\lambda_\ell$ tuned by a small validation search. Calibration is selected using validation hard-spike accuracy only, then applied once on test. Calibrated results are always reported as separate baselines and are never merged into the default hard-spike columns.

\paragraph{Seed sensitivity.}
All methods use a shared seed list. In addition to mean $\pm$ std, we report per-seed hard-spike accuracy and transfer-gap points, together with median and interquartile range when variance is high (notably at small $\rho$), so the reader can see whether improvements reflect a broad shift or the removal of a few unstable runs.

\ifanonymous\else
\section*{Acknowledgments}
I would like to thank the numerous people who have given me draft feedback advice.
\fi

\bibliographystyle{IEEEtran}
\bibliography{references}

\end{document}